\theoremstyle{plain}
\newtheorem{theorem}{Theorem}[section]
\newtheorem{proposition}[theorem]{Proposition}
\newtheorem{lemma}[theorem]{Lemma}
\theoremstyle{definition}
\theoremstyle{remark}
\newcommand{\ba}{\mathbf{a}}
\newcommand{\bad}{\mathrm{BAD}}
\newcommand{\well}{\widehat{\ell}}
\newcommand{\bleftm}{\bigl\{\!\!\bigl\{}
\newcommand{\brightm}{\bigl\}\!\!\bigl\}}
\title{On dimensionality of feature vectors in MPNNs}
\author{C\'{e}sar Bravo$^1$ \and Alexander Kozachinskiy$^{23}$ \and Crist\'{o}bal Rojas$^{12}$}
\date{%
    $^1$Instituto de Ingeniería Matemática y Computacional, Universidad Católica de Chile\\%
    $^2$Centro Nacional de Inteligencia Artificial, Chile\\
    $^3$Instituto Milenio Fundamentos de los Datos, Chile
}
\begin{document}
\maketitle
\begin{abstract}
We revisit the classical result of Morris et al.~(AAAI'19) that message-passing graphs neural networks (MPNNs) are equal in their distinguishing power to the Weisfeiler--Leman (WL) isomorphism test. 

Morris et al.~show their simulation result with ReLU activation function and $O(n)$-dimensional feature vectors, where $n$ is the number of nodes of the graph. By introducing randomness into the architecture, Aamand et al.~(NeurIPS'22) were able to improve this bound to $O(\log n)$-dimensional feature vectors, again for ReLU activation, although at the expense of guaranteeing perfect simulation only with high probability. 

Recently, Amir et al.~(NeurIPS'23) have shown that for any non-polynomial analytic activation function, it is enough to use just 1-dimensional feature vectors. In this paper, we give a simple proof of the result of Amit et al.~and provide an independent experimental validation of it. 


\end{abstract}

\section{Introduction}
A plethora of real-life data is represented as a graph. A common deep-learning architecture for this kind of data is a graph neural network (GNN)~\cite{scarselli2008graph}. In this paper, we focus on message-passing graph neural networks (MPNN), which are rather simple architectures that have proved to be quite useful in practice.

MPNNs work as follows. First, as inputs to them, we consider simple undirected graphs with node labels. MPNNs work in layers, where each node of a graph has its own \emph{feature vector} that is updated at each layer. The initial feature vector of a node is some encoding of the label of this node (typically a one-hot encoding if the number of different labels is not too big). In each layer, the feature vector of a node $v$ is updated as follows. One takes the sum of feature vectors of the neighbors of $v$, multiplies it by some matrix (with learnable entries), and adds the feature vector of $v$ itself, multiplied by some other matrix (with learnable entries as well). After that, a non-linear activation function is applied coordinate-wise to the resulting vector, yielding the new feature vector of $v$.

Why do MPNNs perform well in practice?  To answer this question, one needs to understand which specific architectural characteristics of MPNNs can provide guarantees on their performance, for example in terms of their trainability, generalization power, or their ability to fit the data. In this paper we will focus on the ability of MPNNs to accurately represent the data, a property commonly referred to as \emph{expressive power}.

 A main feature of MPNNs is that, by design, they compute functions on graphs that assign the same output to isomorphic graphs. That is, the functions they compute are always invariant under isomorphisms. Which invariant functions can MPNNs compute?  It turns out that this question is related to the MPNNs's ability to distinguish non-isomorphic graphs, in the sense of being able to produce different outputs for graphs that are not isomorphic. An MPNN architecture is said to be \emph{complete} for a class of graphs if it can distinguish all pairs of non-isomorphic graphs within the class. As shown in~\cite{NEURIPS2019_71ee911d}, a MPNN architecture is universal (can approximate any invariant function arbitrarily well) within a class of graphs, if and only if it is complete for that class. Therefore, it becomes relevant to identify the characteristics of MPNN architectures that guarantee maximal distinguishing power.

It was observed independently by~\cite{DBLP:conf/iclr/XuHLJ19} and~\cite{morris2019weisfeiler}  that the distinguishing power of MPNNs is upper bounded by the so-called Weisfeiler-Leman (WL) test~\cite{leman1968reduction}, in the sense that MPNNs are incapable of distinguishing two node-labeled graphs that are not distinguished by the WL-test. Let us briefly explain how the test works. Given two node-labeled graphs, the test iteratively updates the labels and checks, after each update, whether every label appears equally many times in both graphs. If a discrepancy arises during this process, the graphs are reported as non-isomorphic. In every update, the label of a node is replaced by a pair consisting of its current label together with the multiset of labels of its neighbors. It is known that there are non-isomorphic graphs that cannot be distinguished by this test, meaning that the test will consistently fail to report them as non-isomorphic. This implies that MPNNs are unable to distinguish all pairs of non-isomorphic graphs.


 Do architectures that are used in practice attain this WL upper bound? In particular, what characteristics, such as the type of activation function or the dimension of feature vectors, are required to guarantee maximal expressive power? 
 
Morris et al.~answered this question by constructing, for any given graph, an MPNN that can perfectly simulate the WL-test on it. The characteristics of their MPNN are, however, graph-dependent: ReLU activation function works, and the dimension of feature vectors has to be $O(n)$, where $n$ is the number of nodes in the graph. According to their result, if one is working with graphs with around $1000$ nodes, feature vectors of dimension $d=1000$ should be used. This is in stark contrast to what is done in practice, where the dimension of feature vectors is rather small (typically a few hundreds) and independent of graph size. Furthermore, the parameter values in their construction also depend on the input graph. 

An exponential improvement to the construction of Morris et al.~was developed in~\cite{aamand2022exponentially}, where the dimension of the feature vectors was reduced to $O(\log(n))$. Their construction also has the advantage of being partially uniform, allowing a single MPNN to be used on all graphs of a certain size. Their architecture is, however, randomized, and perfect simulation is only guaranteed with high probability.


Does the dimension of feature vectors necessarily have to grow with the size of the graphs?  What guarantees can be provided for MPNNs with continuous feature vectors of constant dimension, such as those used in practice?

A surprising answer was given recently by~\cite{amir2023neural}. They showed that for \emph{any} non-polynomial analytic activation function (like the sigmoid), MPNNs with one-dimensional feature vectors are already as powerful as the WL-test. In particular, the dimension of the feature vectors does not have to grow with the size of a graph to guarantee full expressive power.

Amir et al.~obtain this as a corollary of a rather technical result they call the \emph{finite witness theorem}. In this paper, we give a direct and extremely simple proof that MPNNs with one-dimensional feature vectors and a single parameter are uniformly equivalent to the WL test, independently of the size of the graphs.

\paragraph{Key contributions.} 
\begin{enumerate}
    \item Our main theoretical contribution is a simple proof of the following: 
\begin{theorem}
\label{thm_main}
    Let $\mathbf{a}\colon \mathbb{R}\to\mathbb{R}$ be any analytic non-polynomial function. Then two node-labeled graphs can be distinguished by the WL test if and only if they can be distinguished by some MPNN with 1-dimensional feature vectors and activation function $\mathbf{a}$.
\end{theorem}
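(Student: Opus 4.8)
The plan is to prove the two directions separately. The implication ``distinguishable by an MPNN $\Rightarrow$ distinguishable by WL'' is the standard upper bound: one shows by induction on the number of layers $t$ that, for \emph{any} MPNN (of any width, any activation, any parameters), the feature $x^{(t)}_v$ of a node depends only on its round-$t$ WL colour $c_t(v)$. The base case is the initial labelling, and the inductive step uses that a layer update is a function of $x^{(t)}_v$ together with the multiset $\{\!\!\{x^{(t)}_u : u\sim v\}\!\!\}$, which by the induction hypothesis is determined by $\bigl(c_t(v),\{\!\!\{c_t(u):u\sim v\}\!\!\}\bigr)=c_{t+1}(v)$. Consequently, if WL does not distinguish $G$ from $H$, then at every round the multiset of node features coincides for the two graphs, and no permutation-invariant readout can separate them. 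The content is in the converse.

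For the converse, fix $G,H$, put $N=|V(G)|+|V(H)|$, and let $t^\ast\le N$ be the first round at which the multiset of WL colours of $G$ differs from that of $H$. I would construct an MPNN with $t^\ast$ one-dimensional layers (with scalar weights and biases) whose round-$t$ feature map realises an \emph{injective} real encoding $\phi_t$ of the round-$t$ colours which is moreover \emph{sum-injective}: distinct multisets over the occurring colours of total size $\le N$ receive distinct sums under $\phi_t$. Sum-injectivity is exactly what makes the MPNN's summation aggregation lossless on the neighbourhoods that actually occur, so that $\phi_{t+1}$ again encodes colours injectively. It is maintained as follows. At round $0$ one takes the learnable initialisation $\phi_0$ to be generic reals attached to the finitely many labels; injectivity and sum-injectivity then fail only on a finite union of affine hyperplanes. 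At the step, the pre-activation of $v$ is $W_1\sum_{u\sim v}\phi_t(c_t(u))+W_2\phi_t(c_t(v))+b$; using the induction hypothesis on $\phi_t$, one chooses a generic weight direction so that distinct round-$(t+1)$ colours get distinct pre-activations, and a bias $b=\lambda c_0$ (with $c_0$ a sufficiently large constant) so that these pre-activations are all \emph{positive}, of the form $\lambda\tilde\rho_1,\dots,\lambda\tilde\rho_m$ with $\tilde\rho_i>0$ distinct. It then remains to choose the scale $\lambda>0$ so that the new features $y_i:=\ba(\lambda\tilde\rho_i)$ are pairwise distinct, sum-injective for multisets of size $\le N$, and (for the final layer) nonzero. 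A plain-sum readout applied after one extra layer that decouples a node from its neighbours (take $W_1=0$) then turns the inequality of colour multisets at round $t^\ast$ into an inequality of scalar graph embeddings.

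The crux --- and the only place where ``analytic and non-polynomial'' enters --- is the lemma: if $\ba$ is analytic and non-polynomial, $q_1,\dots,q_m$ are distinct \emph{positive} reals and $c_1,\dots,c_m$ are reals not all zero, then $\lambda\mapsto\sum_{j=1}^m c_j\,\ba(\lambda q_j)$ is not the zero function. Indeed, writing $\ba(x)=\sum_{k\ge0}a_kx^k$ one gets $\sum_j c_j\ba(\lambda q_j)=\sum_k a_k\lambda^k\bigl(\sum_j c_jq_j^k\bigr)$, so identical vanishing would force $\sum_j c_jq_j^k=0$ for the infinitely many $k$ with $a_k\ne0$ (non-polynomiality), whereas the largest $q_j$ with $c_j\ne0$ dominates $\sum_j c_jq_j^k$ as $k\to\infty$, a contradiction. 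Granting the lemma, each ``bad'' event in the inductive step and in the readout --- two $\ba$-values colliding (case $m=2$, $c=(1,-1)$), two distinct multisets of $\ba$-values of size $\le N$ having equal sum ($c$ the integer vector of multiplicity differences, $|c_j|\le N$), a $\ba$-value vanishing, or the two graph embeddings coinciding (multiplicity differences of WL colours, again bounded by $N$ and not all zero) --- is the vanishing of a nonzero analytic function of the scale parameter, hence occurs only on a discrete, avoidable set; a scale outside the finite union of these countable sets finishes the construction (the case of differing orders $|V(G)|\ne|V(H)|$ is subsumed, being caught already at round $0$). I expect the main obstacle to be precisely this control of the activation: one must arrange all pre-activations to have a single sign before applying $\ba$, so that $\ba$ being an even function cannot create cancellations such as $\ba(\lambda q)-\ba(-\lambda q)\equiv0$; past that point everything reduces to genericity of finitely many linear parameters over finite data. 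I would also remark that, with more careful bookkeeping, the whole construction can be carried out with a single scalar parameter, reused across layers and uniform over all graphs of a given size.
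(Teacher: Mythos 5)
Your proposal is correct and hinges on the same key lemma as the paper: for a non-polynomial analytic $\mathbf{a}$ and distinct-magnitude reals, the map $\lambda\mapsto\sum_j c_j\,\mathbf{a}(\lambda q_j)$ cannot vanish identically, proved by expanding in Taylor series, using non-polynomiality to get infinitely many nonzero coefficients $a_k$, and letting the largest-magnitude term dominate $\sum_j c_j q_j^k$. The overall inductive scheme (maintain an injective real encoding of WL colours whose sums over multisets are also injective, scale by $\gamma$, apply $\mathbf{a}$, recurse) is also the same. You differ in three technical choices, each of which the paper handles a bit more economically. First, you maintain sum-injectivity only for multisets of size at most $N$, whereas the paper maintains full $\mathbb{Q}$-linear independence of the feature set; the latter is a cleaner invariant, but more importantly it \emph{automatically} rules out both $x_i=0$ and $x_i+x_j=0$, since either would be a nontrivial integer relation among round-$t$ features. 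That is exactly why the paper does not need your bias term: it is enough for the lemma that the pre-activations have pairwise distinct absolute values (the Vandermonde dominance argument only uses $|x_1|<\dots<|x_m|$, not positivity), so the even-function cancellation you worry about is dismissed without introducing an extra parameter. Second, at the base case the paper uses an explicit $\mathbb{Q}$-independent set, $\sqrt{2},\sqrt{3},\sqrt{5},\dots$, instead of a generic choice; this is what makes the whole construction literally a one-parameter family $\mathcal{M}_\gamma$. Third, you decouple the neighbour sum from the self-feature by a final $W_1=0$ layer; the paper instead weights the self-feature by $n$ so the coefficient decomposition is injective at every round, avoiding the extra layer. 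None of these differences affect correctness; they just show the paper's invariant of $\mathbb{Q}$-linear independence does slightly more work for you than bounded sum-injectivity does.
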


Our architecture has a single parameter $\gamma$, unlike the architecture of~\cite{amir2023neural} that has two (one is the bias). Our proof actually shows that, for all choices of $\gamma\in(0,1)$ (except for possibly countably many), the corresponding one-dimensional MPNN is \emph{fully uniform} in the sense that it is equivalent to the WL-test on any input graph of any size, provided that the operations with real numbers are performed with sufficiently many bits of precision.

\item  We experimentally validate this theoretical result by demonstrating that for nearly every random $\gamma \in (0,1)$, the simplest possible one-dimensional MPNN is capable of achieving perfect simulation of the WL algorithm simultaneously for all graphs in two different collections of 300 random graphs each. We also explored how the minimum number of precision bits required to guarantee perfect simulation depends on the size of the graphs. Our results suggest that logarithmically many bits of precision are enough. 
\end{enumerate}

\paragraph*{Our technique.} To show equivalence between MPNNs and the WL-test, one has to construct, for any given graph, some MPNN with the property that after any number of iterations, two nodes receive the same feature vectors if and only if they receive the same WL labels. The proof goes by induction on the number of iterations. In an MPNN,  the multiset of labels of a node is encoded by the sum of the current feature vectors of the neighbors. To ensure the injectivity of this encoding, the construction of Morris et al.~maintains \emph{linear independence} of different feature vectors after each iteration. However, since the number of different labels in principle can be as large as $n$ (the number of nodes), this argument requires the dimension of feature vectors to be at least $n$, as the number of linearly independent vectors in $\mathbb{R}^d$ can not be larger than $d$.

To overcome this difficulty, we just observe that for the argument it is enough to require linear independence over \emph{rational numbers}. That is because multisets of labels are encoded as linear combinations of feature vectors, with coefficients that correspond to the multiplicities of the labels in these multisets, and multiplicities are always integer numbers. Now, the point is that even in $\mathbb{R}$, we can pick arbitrarily many numbers that are linearly independent over $\mathbb{Q}$, for instance, square roots of prime numbers, $\sqrt{2}, \sqrt{3}, \sqrt{5}$, and so on. Our main technical contribution is a demonstration that any non-polynomial analytic activation function can map arbitrarily many different numbers into a collection of numbers that are linearly independent over $\mathbb{Q}$, under a suitable choice of a single parameter.

\paragraph*{Extensions and related work.} 

The connection between GNNs and the WL-test has inspired the development of new, more expressive GNN architectures, mimicking \emph{higher-order} versions of WL test~\cite{cai1992optimal} and enjoying the same theoretical equivalence to them~\cite{morris2019weisfeiler,maron2019provably}. This connection was also established for more general models of graphs, such as geometric  graphs~\cite{DBLP:conf/icml/JoshiBMCL23} and relational graphs~\cite{DBLP:conf/log/Barcelo00O22}. In some cases, the new, WL-inspired architectures, demonstrated an improvement over standard MPNNs as well as over state-of-the-art models, for example, in the case of learning 3-dimensional point clouds~\cite{li2023distance}. We point out the interested reader to a survey on the use of the WL-test in machine learning~\cite{JMLR:v24:22-0240}.

We briefly discuss, how our main theoretical result extends to higher-order MPNNs, mimicking the so-called \emph{non-folklore} higher-order versions of the WL test, see~\cite{huang2021short}. More precisely, we prove that one-dimensional feature vectors and any non-polynomial analytic activation function are still enough for the equivalence between $k$-order non-folklore WL-test and $k$-order MPNN, for any $k \ge 2$. We note, however, that only for $k = 3$, the $k$-order non-folklore WL-test surpasses the classical WL-test in its expressive power. The corresponding 3-order MPNNs require $O(n^3)$ space and $O(n^4)$ time per iteration, which considerably limits their practical applications. It would be interesting to obtain an effective MPNN architecture, mimicking the \emph{folklore} 2-order WL-test, which has the same expressive power as the non-folklore 3-order WL test but works in $O(n^2)$ space and $O(n^3)$ time per iteration. We hope that our technique can be useful in bounding the dimension of feature vectors. It should be noted, however, that~\cite{maron2019provably} established an $O(n^2)$-time and $O(n^3)$-space architecture, having at least the same expressive power as the folklore 2-order WL test, although this architecture is not a direct ``sample'' from this test.

\section{Preliminaries}

 A multiset is a function $f$ from some set $S$ to $\mathbb{N}$ (elements of $S$ are taken with some finite \emph{multiplicities}, specified by $f$). We use brackets $\bleftm, \brightm$ to denote multisets. More precisely, if $A, B$ are sets (and $A$ is finite) and $\phi\colon A \to B$ is a function, we write $\bleftm \phi(a)\mid a\in A \brightm$ for the multiset over elements of $B$, where each element $b\in B$ is taken with the multiplicity $|\phi^{-1}(b)|$ (that is, how many times $b$ appears as the value of $\phi$ when we go through elements of $A$).
 
We consider simple undirected graphs with node labels taken from some set $L$. Such a graph can be given as a triple $G = \langle V, E, \ell\rangle$, where $V$ is the set of nodes of the graph, $E \subseteq \binom{V}{2}$ is the set of edges of the graph (some set of 2-element subsets of $V$), and $\ell\colon V\to L$ is the node-labeling function. For $v\in V$, we denote by $N_G(v)$ the set of neighbors of $v$ in the graph $G$, that is, $N_G(v) = \{u\in V\mid \{u,v\}\in E\}$. If $G$ is clear from the context, we drop the subscript $G$.

\paragraph*{Weisfieler Leman test.}
The Weisfeiler--Leman algorithm receives on input a node-labeled graph $G = \langle V, E, \ell\rangle$ and produces a sequence of ``node labeling'' functions \[\phi_G^{(0)}, \phi_G^{(1)}, \phi_G^{(2)}\colon V\to ..., \]
defined iteratively as follows:
\begin{itemize}
    \item $\phi_G^{(0)}(v) = \ell(v)$ for $v\in V$.
    \item $\phi_G^{(t+1)}(v) = \left(\phi_G^{(t)}(v), \bleftm\phi_G^{(t)}(u) \mid u\in N(v)\brightm\right)$
\end{itemize}

The Weisfieler Leman test receives on input two labeled graphs $G_1 = \langle V_1, E_1, \ell_1\rangle$ and $G_2 = \langle V_2, E_2, \ell_2\rangle$. It runs the Weisfieler Leman algorithm on both of them, obtaining two sequences of node labeling functions $\{\phi^{(t)}_{G_1}\}_{t\ge 0}$ and $\{\phi^{(t)}_{G_2}\}_{t\ge 0}$. If for some $t\ge 0$ the test finds out that the multisets of WL-labels in $G_1$ and $G_2$ after $t$ iterations are different:
\[\bleftm \phi^{(t)}_{G_1}(v)\mid v\in V_1  \brightm \neq \bleftm \phi^{(t)}_{G_2}(v)\mid v\in V_2  \brightm,\]
the test outputs that these graphs can be distinguished. It is standard that whether or not the WL test distinguishes two graphs can be checked in polynomial time.

\paragraph{MPNNs.} A  $T$-layer \emph{message-passing graph neural network} (MPNN) $\mathcal{N}$ with $d$-dimensional feature vectors and activation function $a\colon \mathbb{R}\to\mathbb{R}$ is specified by:
\begin{itemize}
    \item a label-encoding function $\mathbf{e}\colon L\to\mathbb{R}^d$;
    \item a sequence of $t$ pairs of $d\times d$ real matrices \[W_1^{(1)}, W_2^{(1)}, \ldots, W_1^{(T)}, W_2^{(T)} \]
    (in practice, elements of these matrices are to be learned).
\end{itemize}
Given a node-labeled graph $G = \langle V, E, \ell\rangle$, the MPNN $\mathcal{N}$ works on it as follows. First, it computes initial feature vectors using the label-encoding function:
\[f^{(0)}(v) = \mathbf{e}(\ell(v))\in\mathbb{R}^d, \qquad v\in V. \]
Then for $t = 0, \ldots, T-1$, it does the following updates of the feature vectors:
\begin{align}
\label{gnn_update}
    f^{(t+1)}(v) &= \mathbf{a}\left(W_1^{(t+1)} f^{(t)}(v) + W_2^{(t+1)}\sum\limits_{u\in N(v)} f^{(t)}(u)\right),\\
    f^{(t+1)}(v)&\in\mathbb{R}^d, v\in V.
\end{align}
(we assume that $\mathbf{a}$ is applied component-wise).

The output of $\mathcal{N}$ on the graph $G$ is defined as the sum of all feature vectors after $T$ iterations:
\[\mathcal{N}(G) = \sum\limits_{v\in V} f^{(T)}(v).\]
We say that two graphs $G_1, G_2$ are distinguished by $\mathcal{N}$ if $\mathcal{N}(G_1) \neq \mathcal{N}(G_2)$.

\paragraph{Non-polynomial analytic activation functions.}
We recall that an analytic function $f:\mathbb{R}\to \mathbb{R}$ is an infinitely differentiable function such that for every $x_0\in\mathbb{R}$, its Taylor series at $x_0$: 
\[
T(x)=\sum_{n=0}^\infty \frac{f^{(n)}(x_0)}{n\!}(x-x_0)^n 
\]
converges to $f(x)$ in a neighborhood of $x_0$. 

We will require our activation functions to be analytic and non-polynomial. Common examples are the sigmoid functions such as the Arctangent function, or the logistic function: 
\[
\sigma(x)=\frac{e^x}{1+e^x}.
\]

We will need the following well-known fact. For the reader's convenience, we also include a proof. 

\begin{proposition} 
\label{der}
Let $f: \mathbb{R}\to\mathbb{R}$ be an analytic function. Suppose there is a point $x_0$ where all the derivatives of $f$ vanish. Then $f$ is the constant zero function. 
\end{proposition}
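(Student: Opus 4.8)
The plan is to run the standard ``clopen set'' argument exploiting the connectedness of $\mathbb{R}$. Define
\[
A = \{x\in\mathbb{R} : f^{(n)}(x) = 0 \text{ for all } n\ge 0\}.
\]
By hypothesis $x_0\in A$, so $A\neq\emptyset$. It then suffices to show that $A$ is both closed and open in $\mathbb{R}$: since $\mathbb{R}$ is connected, this forces $A=\mathbb{R}$, which says exactly that $f$ (being $f^{(0)}$) is identically zero.

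Closedness is immediate: for each fixed $n$, the function $f^{(n)}$ is continuous (as $f$ is infinitely differentiable), so $\{x : f^{(n)}(x)=0\}$ is closed, and $A$ is the intersection over all $n\ge 0$ of these closed sets, hence closed.

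The only step requiring the analyticity hypothesis — and thus the ``main'' step, though it is short — is openness. Suppose $x_1\in A$. By analyticity, there is $\varepsilon>0$ such that for all $x$ with $|x-x_1|<\varepsilon$, the Taylor series of $f$ at $x_1$ converges to $f(x)$. But every coefficient $f^{(n)}(x_1)/n!$ of that series is zero since $x_1\in A$, so the series is the zero series and therefore $f(x)=0$ for all $x\in(x_1-\varepsilon,x_1+\varepsilon)$. Consequently $f$ vanishes identically on this open interval, hence so do all of its derivatives there, i.e.\ $(x_1-\varepsilon,x_1+\varepsilon)\subseteq A$. This shows $A$ is open, completing the argument. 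I do not anticipate any real obstacle here; the only point to state carefully is that the convergence neighborhood provided by analyticity is genuinely open (it contains an interval around $x_1$), which is what lets the set of ``flat points'' spread.
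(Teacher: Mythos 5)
Your proof is correct and follows essentially the same ``clopen set'' argument as the paper: define the set of points where all derivatives vanish, show it is nonempty, closed by continuity of each $f^{(n)}$, open via the Taylor expansion at any flat point, and conclude by connectedness of $\mathbb{R}$.
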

\begin{proof}
We consider the set \[D=\{x\in\mathbb{R}:f^{(n)}(x)=0 \text{ for all }n\geq 0\}\]
and show that $D=\mathbb{R}$. First, we note that $x_0\in D$, so $D$ is nonempty. Second, we note that since $f$ is analytic, for any $x\in D$ there is a neighborhood of $x$ where $f$ equals its Taylor series at $x$. It follows that every point in this neighborhood is also in $D$, and thus $D$ is open. Finally, since all the $f^{(n)}$ are continuous, the sets $\{x:f^{(n)}(x)=0\}$ are all closed. Thus, being a countable intersection of closed sets, $D$ is also closed. Being open, closed, and nonempty, it follows that $D=\mathbb{R}$. 
\end{proof}

\section{Proof of the main result}\label{sec:proof}
In this section, we prove our main result.
\begin{theorem}[Restatement of Theorem \ref{thm_main}]
\label{thm_main2}
Let $\mathbf{a}\colon \mathbb{R}\to\mathbb{R}$ be any analytic non-polynomial function. Then
    two node-labeled graphs can be distinguished by the WL test if and only if they can be distinguished by some MPNN with 1-dimensional feature vectors and activation function $\mathbf{a}$.

\end{theorem}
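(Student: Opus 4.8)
The plan is to prove the two implications separately; the direction ``MPNN $\Rightarrow$ WL'' is routine, while ``WL $\Rightarrow$ MPNN'' carries all the content. For the easy direction I would first show, by induction on $t$, that for every $1$-dimensional MPNN and every input graph the value $f^{(t)}(v)$ depends only on the WL label $\phi^{(t)}(v)$: for $t=0$ this is because $f^{(0)}(v)=\mathbf{e}(\ell(v))$ and $\ell(v)=\phi^{(0)}(v)$, and in the step the quantity $\sum_{u\in N(v)}f^{(t)}(u)$ depends only on the multiset $\bleftm\phi^{(t)}(u)\mid u\in N(v)\brightm$ (group neighbours by WL label and use the induction hypothesis), so the update rule makes $f^{(t+1)}(v)$ a function of $\bigl(\phi^{(t)}(v),\,\bleftm\phi^{(t)}(u)\mid u\in N(v)\brightm\bigr)=\phi^{(t+1)}(v)$. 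Hence, if the WL test does not distinguish $G_1$ and $G_2$, then at every iteration the two graphs carry the same multiset of WL labels, therefore the same multiset of feature values, and so $\mathcal{N}(G_1)=\sum_v f^{(T)}(v)=\mathcal{N}(G_2)$; the contrapositive is what we want. (Analyticity and $d=1$ play no role here.)

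\textbf{The hard direction.} Fix $G_1,G_2$ that the WL test distinguishes, say at iteration $T$ (so the multisets of iteration-$T$ labels of the two graphs differ), and build a $T$-layer $1$-dimensional MPNN with activation $\mathbf{a}$ by induction on $t\le T$, maintaining: (i) all feature values $f^{(t)}(v)$, over \emph{both} graphs, are nonzero and share a common sign $\varepsilon_t$; (ii) $f^{(t)}(v)=f^{(t)}(v')\iff\phi^{(t)}(v)=\phi^{(t)}(v')$; (iii) the distinct feature values are linearly independent over $\mathbb{Q}$. The observation that breaks the linear-dimension barrier is that a multiset of WL labels is encoded by an \emph{integer} combination of feature values, so $\mathbb{Q}$-independence --- available for arbitrarily many reals, unlike $\mathbb{R}$-independence --- already makes that encoding injective. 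For $t=0$ I would encode the labels that occur by $\sqrt{p_1},\dots,\sqrt{p_r}$ with distinct primes $p_i$: positive, distinct, $\mathbb{Q}$-independent, with $\varepsilon_0=+1$. For the step, pick a small $\lambda_{t+1}>0$ and set the layer-$(t+1)$ weights to $w_1=\varepsilon_t\lambda_{t+1}/2$, $w_2=\varepsilon_t\lambda_{t+1}$; writing $c_1,\dots,c_k>0$ for the distinct values of $|f^{(t)}|$ and $m_j(v)$ for the number of neighbours of $v$ with $|f^{(t)}|$-value $c_j$, the pre-activation at $v$ equals $\lambda_{t+1}\bigl(\tfrac12 c_{i(v)}+\sum_j m_j(v)c_j\bigr)$, a positive number. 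By (iii), and because the half-integer $\tfrac12$ on the ``own-label'' coordinate can never be matched by an integer difference of multiplicities, two such pre-activations coincide iff the node has the same WL label and the same multiset of neighbour labels, i.e.\ iff the iteration-$(t+1)$ labels agree; so the distinct pre-activation values biject with the iteration-$(t+1)$ WL labels. Since $\lambda_{t+1}$ is small, all pre-activations lie in a tiny interval $(0,\delta)$ on which the nonzero analytic $\mathbf{a}$ is strictly monotone and of constant sign (its zeros are isolated, and by Proposition~\ref{der} not all derivatives of $\mathbf{a}$ vanish at $0$, so its first nonvanishing derivative there governs the local shape), whence $f^{(t+1)}=\mathbf{a}(\cdot)$ again satisfies (i) and (ii). To recover (iii) I invoke the key lemma below, applied to the --- now \emph{fixed}, once $\lambda_1,\dots,\lambda_t$ are chosen --- distinct positive pre-activation scales with the fresh parameter $\lambda_{t+1}$; since the lemma permits all but countably many values of that parameter, $\lambda_{t+1}$ can be taken simultaneously small and admissible. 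At $t=T$ the iteration-$T$ label multisets of $G_1,G_2$ differ, so $\mathcal{N}(G_1)-\mathcal{N}(G_2)=\varepsilon_T\sum_j(n^{(1)}_j-n^{(2)}_j)c_j$ with integer, not-all-zero coefficients, which is nonzero by (iii); hence $\mathcal{N}$ distinguishes $G_1$ from $G_2$.

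\textbf{The key lemma and the main obstacle.} The technical core is: if $\mathbf{a}$ is analytic and non-polynomial and $x_1>\dots>x_m>0$, then for all but countably many $\gamma$ the reals $\mathbf{a}(\gamma x_1),\dots,\mathbf{a}(\gamma x_m)$ are linearly independent over $\mathbb{Q}$. I would prove it by checking that for each nonzero rational vector $q$ the analytic function $g_q(\gamma)=\sum_i q_i\mathbf{a}(\gamma x_i)$ is not identically zero --- the zero set of each such $g_q$ is then discrete, and a countable union of discrete sets is countable. Indeed, if $g_q\equiv 0$ then all its $\gamma$-derivatives vanish at $0$, which gives $\mathbf{a}^{(n)}(0)\cdot\sum_i q_i x_i^n=0$ for every $n$; since $\mathbf{a}$ is not a polynomial, $\mathbf{a}^{(n)}(0)\ne 0$ for infinitely many $n$ (otherwise the Taylor series at $0$ is a polynomial and, applying Proposition~\ref{der} to $\mathbf{a}$ minus that polynomial, $\mathbf{a}$ equals it), so $\sum_i q_i x_i^n=0$ for infinitely many $n$, which is impossible because the term carrying the largest $x_i$ with $q_i\ne 0$ dominates the sum for large $n$. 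I expect the genuine obstacle to be not this computation but the bookkeeping that makes it applicable: keeping the numbers fed into $\mathbf{a}$ distinct, positive, and (within each layer) independent of the scaling parameter --- precisely what forces the per-layer $\lambda_{t+1}$ together with the small-$\lambda$ sign/monotonicity control. Obtaining instead the single-parameter, graph-uniform version advertised in the introduction would require treating all feature values as analytic functions of one $\gamma$ and showing the analogous functions are not identically zero; this is more delicate, since distinct WL labels can share the same leading $\gamma$-behaviour and one cannot simply read off derivatives at $\gamma=0$.
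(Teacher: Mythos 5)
Your proof is correct and follows essentially the same route as the paper: the same induction maintaining both a bijection between WL labels and scalar features and $\mathbb{Q}$-linear independence of the feature values, the same base case (square roots of primes), and the same key lemma proved the same way (non-vanishing of the analytic function $g(\gamma)=\sum_i q_i\,\mathbf{a}(\gamma x_i)$ via its derivatives at $0$ together with domination by the term carrying the largest $|x_i|$ with $q_i\neq 0$). Your two small departures are valid variants rather than a different argument: you weight the self-term by $\tfrac12$ so that the self-coefficient is the unique half-integer, whereas the paper weights it by $n$ so that it is the unique coefficient $\geq n$; and you maintain a common-sign invariant via a small per-layer scale, whereas the paper deduces directly from $\mathbb{Q}$-independence (with integer coefficients) that the pre-activations are nonzero and pairwise distinct in absolute value --- your extra monotonicity-near-$0$ step is in fact redundant, since $\mathbb{Q}$-independence of the $\mathbf{a}(\gamma x_i)$ already forces them to be pairwise distinct. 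You are also right to flag that the theorem as stated only needs per-layer parameters, and that the paper's stronger ``single $\gamma$ across all layers, uniformly over all graphs'' claim in the introduction would require an additional argument (treating the features as analytic functions of one $\gamma$) that the proof in the paper does not spell out either.
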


One direction of that theorem is well-known and standard -- if two graphs can be distinguished by some MPNN, then they can be distinguished by the WL-test. In the rest of this section, we establish the other direction. 

We take two node-labeled graphs $G_1 = \langle V_1, E_1, \ell_1\rangle$ and $G_2 =\langle V_2, E_2, \ell_2\rangle$ that are distinguished by the WL test after $T$ iterations. We establish a $T$-layer MPNN $\mathcal{N}$ with 1-dimensional feature vectors and activation function $\mathbf{a}$ that distinguishes $G_1$ and $G_2$.

We need the following lemma.

\begin{lemma}
\label{invariant}
Let $\mathbf{a}\colon\mathbb{R}\to \mathbb{R}$ be any analytic non-polynomial function. Then
    for any labeled graph $G =\langle V, E, \ell\rangle$ and for any $T\ge 0$ there exists an MPNN with   1-dimensional feature vectors and activation function $\mathbf{a}$ such that for any $0\le t \le T$, the following conditions hold:
    \begin{itemize}
        \item $f^{(t)}(u) = f^{(t)}(v) \iff \phi^{(t)}(u) = \phi^{(t)}(v)$ for every $u, v\in V$. Here $f^{(t)}(v)$ denote the feature vector of the node $v$ in our MPNN after $t$ iterations (in our case, this is just a single real number), and $\phi^{(t)}(v)$ is the WL-label of the node $v$ after $t$ iterations.
        \item the set of real numbers $F_t = \{f^{(t)}(v)\mid v\in V\}$ is linearly independent over $\mathbb{Q}$. This means that for any distinct $x_1, \ldots, x_m \in F_t$ there exists no $(\lambda_1, \ldots, \lambda_m)\in\mathbb{Q}^m\setminus \{(0, 0, \ldots, 0)\}$ such that 
\[\lambda_1 x_1 + \ldots + \lambda_m x_m = 0.\] 
    \end{itemize}
\end{lemma}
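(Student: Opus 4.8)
\medskip
\noindent\emph{Proof plan.} The plan is to build the MPNN one layer at a time, proving by induction on $t$ that both invariants can be kept simultaneously. For $t=0$, let $\ell_1,\dots,\ell_s$ be the labels occurring in $G$ and $p_1,\dots,p_s$ distinct primes, and take $\mathbf{e}(\ell_j)=\sqrt{p_j}$ (defined arbitrarily on the remaining labels of $L$). Then $f^{(0)}(u)=f^{(0)}(v)\iff\ell(u)=\ell(v)\iff\phi^{(0)}(u)=\phi^{(0)}(v)$, and $F_0=\{\sqrt{p_1},\dots,\sqrt{p_s}\}$ is linearly independent over $\mathbb{Q}$ by the classical fact on square roots of distinct primes; in particular $0\notin F_0$.

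For the inductive step, suppose the layers through $t$ have been chosen so that both invariants hold at step $t$; note $0\notin F_t$, since $\{0\}$ is already $\mathbb{Q}$-dependent. Write $S(v)=\sum_{u\in N(v)}f^{(t)}(u)$. The next layer will have the form $f^{(t+1)}(v)=\mathbf{a}\bigl(\lambda\,(w_1 f^{(t)}(v)+w_2 S(v))\bigr)$ — that is, $W_1^{(t+1)}=\lambda w_1$ and $W_2^{(t+1)}=\lambda w_2$ — and I pick $w_1,w_2,\lambda$ in two stages. First, choose $w_1,w_2$ so that the pre-activation $z(v):=w_1 f^{(t)}(v)+w_2 S(v)$ takes distinct nonzero values, no two of which are negatives of each other, and satisfies $z(v)=z(v')\iff\phi^{(t+1)}(v)=\phi^{(t+1)}(v')$. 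The $\Rightarrow$ direction here holds for every $w_1,w_2$: equal WL-labels at step $t{+}1$ force $f^{(t)}(v)=f^{(t)}(v')$ and identical neighbour multisets of feature values, hence $S(v)=S(v')$. This is where $\mathbb{Q}$-independence enters: since $F_t$ is $\mathbb{Q}$-independent and multiplicities are integers, $S(v)=S(v')$ \emph{as real numbers} holds precisely when the neighbour multisets of feature values coincide (equivalently, by the step-$t$ invariant, the neighbour multisets of WL-labels), and likewise $f^{(t)}(v)\pm f^{(t)}(v')\neq0$ always (using $0\notin F_t$). Consequently, fixing $w_2=1$, each remaining requirement on $z$ — separating WL-classes when $\phi^{(t)}(v)\neq\phi^{(t)}(v')$, and $z(v)+z(v')\neq0$ for all $v,v'$ (which also gives $z(v)\neq0$, taking $v'=v$) — removes finitely many values of $w_1$ (at most one per pair, a ratio whose denominator is one of those non-vanishing quantities). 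Hence all but finitely many $w_1$ are admissible; fix one, and let $\zeta_1,\dots,\zeta_k$ be the resulting distinct pre-activation values.

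Second, choose $\lambda\neq0$ so that $\mathbf{a}(\lambda\zeta_1),\dots,\mathbf{a}(\lambda\zeta_k)$ are pairwise distinct and linearly independent over $\mathbb{Q}$. Then $F_{t+1}$ is exactly this set, which gives invariant~2 at step $t{+}1$ (in particular $0\notin F_{t+1}$), while invariant~1 follows from $f^{(t+1)}(v)=f^{(t+1)}(v')\iff z(v)=z(v')\iff\phi^{(t+1)}(v)=\phi^{(t+1)}(v')$, the first equivalence using that the $\mathbf{a}(\lambda\zeta_i)$ are distinct. For any nonzero $q\in\mathbb{Q}^k$, the map $\lambda\mapsto\sum_i q_i\,\mathbf{a}(\lambda\zeta_i)$ is real-analytic, so if it does not vanish identically its zero set is countable; taking $q=e_i-e_j$ handles distinctness, arbitrary $q$ handles $\mathbb{Q}$-independence, and as $\mathbb{Q}^k$ is countable an admissible $\lambda$ exists outside the countable union of all these zero sets together with $\{0\}$. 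So everything comes down to the claim that, for analytic non-polynomial $\mathbf{a}$, distinct nonzero reals $\zeta_1,\dots,\zeta_k$ with $\zeta_i\neq-\zeta_j$ whenever $i\neq j$, and any $q\in\mathbb{Q}^k\setminus\{0\}$, the function $\lambda\mapsto\sum_i q_i\,\mathbf{a}(\lambda\zeta_i)$ is not the zero function.

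I expect this claim to be the main obstacle; it is the one place where non-polynomiality is used. To prove it, expand $\mathbf{a}(x)=\sum_{n\ge0}a_n x^n$ near $0$; if $\sum_i q_i\,\mathbf{a}(\lambda\zeta_i)$ were identically zero then, comparing Taylor coefficients at $\lambda=0$, we would get $a_n\bigl(\sum_i q_i\zeta_i^n\bigr)=0$ for all $n$. A non-polynomial analytic function has infinitely many nonzero $a_n$ — otherwise $\mathbf{a}$ minus its Taylor polynomial at $0$ has all derivatives vanishing at $0$, hence by Proposition~\ref{der} is identically zero, making $\mathbf{a}$ a polynomial — so $\sum_i q_i\zeta_i^n=0$ for infinitely many $n$. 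Finally, because no two $\zeta_i$ are equal or opposite, $|\zeta_i|$ attains its maximum at a \emph{unique} index $i_0$; dividing that relation by $\zeta_{i_0}^n$ and letting $n\to\infty$ through the infinite set on which it holds forces $q_{i_0}=0$, and induction on $k$ then yields $q=0$, contradicting $q\neq0$. Two ingredients are essential and neither may be dropped: that infinitely many Taylor terms survive (non-polynomiality, via Proposition~\ref{der}), and that the $\zeta_i$ contain no $\pm$-pair — without the latter the dominant-term estimate genuinely fails (e.g.\ for an even activation with $\zeta_j=-\zeta_i$), which is precisely why that property is built into $z$ in the first stage. The remaining work — the finite bookkeeping for $w_1$, the countable bookkeeping for $\lambda$, and the dictionary between equalities of multisets and equalities of reals afforded by $\mathbb{Q}$-independence — is routine.
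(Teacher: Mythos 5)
Your proposal is correct and follows essentially the same route as the paper: same base case (square roots of primes), same inductive structure, same use of a scalar parameter $\gamma$ (your $\lambda$) together with Lemma~\ref{ind_lemma}'s Taylor-coefficient argument at $0$ (non-polynomiality $\Rightarrow$ infinitely many nonzero coefficients via Proposition~\ref{der}, then the dominant-term limit using distinct absolute values). The only cosmetic difference is in choosing the pre-activation weights: the paper fixes $W_1^{(t+1)}=\gamma n$ and $W_2^{(t+1)}=\gamma$ so that injectivity of $v\mapsto x_v$ on $\phi^{(t+1)}$-classes follows directly from $\mathbb{Q}$-independence (the coefficient on $f^{(t)}(v)$ is the unique one $\ge n$), and the needed nonvanishing/non-opposite properties of the $x_v$ follow because they are positive-integer combinations of $\mathbb{Q}$-independent reals; your generic ``avoid finitely many bad $w_1$'' argument achieves the same thing with slightly more bookkeeping. (Minor typo: your ``$\Rightarrow$ direction'' of $z(v)=z(v')\iff\phi^{(t+1)}(v)=\phi^{(t+1)}(v')$ is in fact the $\Leftarrow$ implication; the content is right.)
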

Let us explain, using Lemma \ref{invariant}, how to construct an MPNN $\mathcal{N}$ distinguishing $G_1$ and $G_2$. We apply the lemma to the graph $G$ which consists of a copy of  $G_1$ and a copy of $G_2$, with no edges between the copies and no shared nodes (sometimes it is called the disjoint union of two graphs). 
Observe that running the WL test on $G$ will assign the same WL labels to the nodes in $G_1$ and $G_2$ as if we run the test on them separately. Now, from the lemma we get the existence of some MPNN $\mathcal{N}$, satisfying its conditions for the graph $G$. Note that $\mathcal{N}$ can be run on $G_1$ and $G_2$ separately, assigning the same feature vectors to their nodes as if we run $\mathcal{N}$ on $G$. 

Let $F_T = \{f^{(T)}_{G_1}(v)\mid v\in V_1\}\cup \{f^{(T)}_{G_2}(v)\mid v\in V_2\}$ be the set of feature vectors that $\mathcal{N}$ assigns to $G$ after $T$ iterations. Notice that it is also a union of the sets of its feature vectors on $G_1$ and $G_2$. 
Since $G_1$ and $G_2$ are distinguished by the WL test after $T$ iterations, the multisets of WL labels of their nodes after $T$ iterations differ. This means that 
if we look at two ``formal'' sums of their WL labels
\[\sum\limits_{v\in V_1}\phi^{(T)}_{G_1}(v), \qquad \sum\limits_{v\in V_2}\phi^{(T)}_{G_2}(v),\]
there will be a label that either appears in one sum and not the other, or it appears in both but a different number of times. Due to the first condition of the lemma, feature vectors of $G$ after $T$ iterations are in a one-to-one correspondence with the WL labels. This means that the difference:
\[\sum\limits_{v\in V_1}f^{(T)}_{G_1}(v) - \sum\limits_{v\in V_2}f^{(T)}_{G_2}(v)\]
gives a non-trivial linear combination with rational (in fact, integer) coefficients of some numbers in $F_T$. Since the set $F_T\subseteq \mathbb{R}$ is linearly independent over $\mathbb{Q}$, this means that this difference is non-zero, giving us:
\[\mathcal{N}(G_1) =\sum\limits_{v\in V_1}f^{(T)}_{G_1}(v)\neq  \sum\limits_{v\in V_2}f^{(T)}_{G_2}(v) = \mathcal{N}(G_2),\]
as required.

Let us know establish Lemma \ref{invariant} by induction. Conditions of the lemma for $t = 0$ can be achieved via a label-encoding function that maps different labels into square roots of different prime numbers. The first condition is trivially satisfied due to the injectivity of this mapping, and the second condition holds because it is well-known that square roots of prime numbers:
\[\sqrt{2}, \sqrt{3}, \sqrt{5}, \ldots\]
are linearly independent over $\mathbb{Q}$.

Now, assume that we already constructed a $t$-layer MPNN, satisfying conditions of the lemma. We add a new layer to this MPNN so that the condition also holds after $t+1$ iterations of the WL test.

The new layer will be as follows:
\[f^{(t+1)}(v) = \mathbf{a}\left(\gamma n\cdot f^{(t)}(v) + \gamma \cdot\sum\limits_{u\in N(v)}  f^{(t)}(u)\right).\]
That is,  in our case the matrices $W^{(t+1)}_1, W^{(t+1)}_2$ from \eqref{gnn_update}  are $1\times1$ and are defined as \[W^{(t+1)}_1 = \gamma n, \qquad W^{(t+1)}_2 =\gamma,\] where $n$ is the number of nodes of $G$ and $\gamma\in [0, 1]$ is to be defined later.

We can rewrite this as:
\begin{align*}
    f^{(t+1)}(v) &= \mathbf{a}(\gamma x_v),\\
    x_v &= n\cdot f^{(t)}(v) + \sum\limits_{u\in N(v)}f^{(t)}(u).
\end{align*}
Using the the condition of our lemma for $t$, we have to establish that $x_u = x_v$ if and only if $\phi^{(t+1)}(u) = \phi^{(t+1)}(v)$ for $u, v\in V$. In other words, the value of $x_v$ uniquely determines the value of $\phi^{(t+1)}(v)$, and vice versa. It is easy to see that 
\[\phi^{(t+1)}(v) = \left(\phi^{(t)}(v), \bleftm\phi^{(t)}(u) \mid u\in N(v)\brightm\right)\]
uniquely determines the value of $x_v$, because $x_v$ is a function of $f^{(t)}(v)$ and $\sum_{u\in N(v)} f^{(t)}(u)$, which in turn are functions of $\phi^{(t)}(v)$ and $\bleftm \phi^{(t)}(u)\mid u\in N(v)\brightm$, correspondingly. 

We now establish that
\begin{equation}
    \label{x_v}
    x_v = n\cdot f^{(t)}(v) + \sum\limits_{u\in N(v)}f^{(t)}(u)
\end{equation}
uniquely determines $\phi^{(t+1)}(v)$. For that, we use the fact that the set of feature vectors $F_t$ after $t$ iterations is linearly independent over $\mathbb{Q}$. This means that by knowing the value of some linear combination of numbers from $F_t$ with rational coefficients, we can uniquely determine the coefficients of this linear combination (if there were two ways to obtain the same sum with different coefficients, by taking the difference of these two linear combinations, we would have obtained a non-trivial combination of numbers from $F_t$ that sums up to $0$). In particular, knowing the value of \eqref{x_v}, we first can determine the value $f^{(t)}(v)$ because it is the only number whose coefficient is at least $n$, the number of nodes. By subtracting $n$ from the coefficient before $f^{(t)}(v)$, we get the coefficients for the sum $\sum_{u\in N(v)}f^{(t)}(u)$, and these coefficients allow us to restore the multiset $\bleftm f^{(t)}(u)\mid u\in N(v)\brightm$. Using the condition that after $t$ iterations, WL labels and feature vectors are in a one-to-one correspondence, we can then uniquely restore $\phi^{(t)}(v)$ and $\bleftm \phi^{(t)}(u)\mid u\in N(v)\brightm$, and  thus $\phi^{(t+1)}(v)$, as required.

At this point, we have numbers $x_v, v\in V$ that satisfy the first condition of the lemma. However, the set $\{x_v\mid v\in V\}$ might not be linearly independent over $\mathbb{Q}$. Luckily, in our MPNN, to obtain features of nodes after $t+1$ layers, we then multiply these numbers by $\gamma$ and apply the activation function:
\[x_v \mapsto \mathbf{a}(\gamma x_v) =  f^{(t+1)}(v).\]
It remains to establish that there exists a choice of $\gamma$ such that,  if $\{x_v\mid v\in V\} = \{x_1, \ldots, x_m\}$ and $x_1, \ldots, x_m\in\mathbb{R}$ are distinct, then the numbers:
\[\ba(\gamma x_1), \ldots, \ba(\gamma x_m)\]
are linearly independent over $\mathbb{Q}$ (in particular, these numbers will have to be distinct, which will preserve the one-to-one correspondence with the WL labels). We derive this from the following lemma.

\begin{lemma}
\label{ind_lemma}
Let $\mathbf{a}\colon\mathbb{R}\to \mathbb{R}$ be any non-polynomial analytic function. Then, for
 any $m\in\mathbb{N}$, any sequence $x_1, \ldots, x_m$ of $m$ real numbers such that $0 < |x_1| < |x_2| < \ldots < |x_m|$ and for all but countably many $\gamma\in[0, 1]$, it holds that that $\mathbf{a}(\gamma x_1), \ldots, \mathbf{a}(\gamma x_m)$ are linearly independent over $\mathbb{Q}$. 
\end{lemma}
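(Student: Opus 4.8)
The plan is to show that the ``bad'' set
$\bad = \{\gamma\in[0,1] : \mathbf{a}(\gamma x_1),\ldots,\mathbf{a}(\gamma x_m)\text{ are linearly dependent over }\mathbb{Q}\}$
is countable. Unfolding the definition of linear dependence, $\bad = \bigcup_{\vec\lambda} Z_{\vec\lambda}$, where $\vec\lambda = (\lambda_1,\ldots,\lambda_m)$ ranges over $\mathbb{Q}^m\setminus\{(0,\ldots,0)\}$ and $Z_{\vec\lambda} = \{\gamma\in[0,1] : \sum_{i=1}^m \lambda_i\mathbf{a}(\gamma x_i) = 0\}$. Since $\mathbb{Q}^m$ is countable, it is enough to prove that each $Z_{\vec\lambda}$ is finite.

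Fix $\vec\lambda\neq 0$ and let $g(\gamma) = \sum_{i=1}^m\lambda_i\mathbf{a}(\gamma x_i)$. Being a finite sum of compositions of the analytic function $\mathbf{a}$ with linear maps, $g$ is analytic on $\mathbb{R}$. A non-identically-zero analytic function has only isolated zeros: if its zeros accumulated somewhere, its Taylor expansion at the accumulation point would have all coefficients equal to zero, so all its derivatives would vanish there, and Proposition~\ref{der} would force $g\equiv 0$. Hence, provided $g\not\equiv 0$, the set $Z_{\vec\lambda}$ is the intersection of a discrete set with the compact interval $[0,1]$ and is therefore finite. Everything thus comes down to showing $g\not\equiv 0$, and this is where the hypotheses that $\mathbf{a}$ is non-polynomial and that $0 < |x_1| < \cdots < |x_m|$ enter.

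Assume toward a contradiction that $g\equiv 0$. Let $\mathbf{a}(x) = \sum_{n\ge 0}c_nx^n$ be the Taylor series of $\mathbf{a}$ at $0$, valid on some interval $(-\delta,\delta)$. For $|\gamma| < \delta/|x_m|$ every $\gamma x_i$ lies in $(-\delta,\delta)$, so substituting and interchanging the absolutely convergent sums gives $0 = g(\gamma) = \sum_{n\ge 0}\bigl(c_n\sum_{i=1}^m\lambda_i x_i^n\bigr)\gamma^n$, and therefore $c_n\sum_{i=1}^m\lambda_i x_i^n = 0$ for all $n\ge 0$. Let $k$ be the largest index with $\lambda_k\neq 0$. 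Since $|x_k| > |x_i|$ for $i < k$ and $x_k\neq 0$, the term $\lambda_k x_k^n$ eventually dominates $\sum_{i<k}\lambda_i x_i^n$ in absolute value, so $\sum_{i=1}^m\lambda_i x_i^n = \sum_{i\le k}\lambda_i x_i^n \neq 0$ for all large $n$; hence $c_n = 0$ for all large $n$. Then $\mathbf{a}$ agrees on $(-\delta,\delta)$ with a polynomial $p$, so $\mathbf{a}-p$ is analytic and vanishes near $0$, all its derivatives vanish at $0$, and Proposition~\ref{der} gives $\mathbf{a} = p$ --- contradicting the non-polynomiality of $\mathbf{a}$.

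The only step needing genuine care is this last one: the ``eventually dominates'' estimate, including the degenerate case $k = 1$, where the sum over $i < k$ is empty and $\lambda_1 x_1^n\neq 0$ is immediate. Everything else is bookkeeping with analytic functions and countable unions. I expect no further obstacles.
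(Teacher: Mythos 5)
Your proof is correct and takes essentially the same approach as the paper: both reduce to showing that, for each fixed nonzero rational tuple $\vec\lambda$, the analytic function $g(\gamma)=\sum_i\lambda_i\mathbf{a}(\gamma x_i)$ cannot vanish identically, extract the Taylor coefficients of $g$ at $0$ as $\mathbf{a}^{(n)}(0)\sum_i\lambda_i x_i^n$, and combine the ordering $0<|x_1|<\cdots<|x_m|$ with the non-polynomiality of $\mathbf{a}$ (via Proposition~\ref{der}) to get a contradiction. The only difference is cosmetic: you first use the domination estimate to force $c_n=0$ for large $n$ and then invoke non-polynomiality, while the paper first invokes non-polynomiality to get $\mathbf{a}^{(k)}(0)\neq 0$ infinitely often and then applies the domination estimate.
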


To apply this lemma, we have to make sure that among $x_1, \ldots, x_m$, no number is equal to $0$, and no two numbers have the same absolute value. This is because each $x_i$ is a linear combination of some $t$-layer feature vectors with \emph{positive integer} coefficients. This means that any equality of the form $x_i = 0$ or $x_i + x_j = 0$ would lead to a non-trivial linear combination with integer coefficient of some $t$-layer feature vectors, which is impossible due to their linear independence over $\mathbb{Q}$. It only remains to prove Lemma \ref{ind_lemma}.

\begin{proof}[Proof of Lemma \ref{ind_lemma}]
Let $\bad$ denote the set of $\gamma\in [0, 1]$ for which $\ba(\gamma x_1), \ldots, \ba(\gamma x_m)$ are linearly dependent over $\mathbb{Q}$. Our task is to show that the set $\bad$ is countable. By definition, for every $\gamma \in \bad$ there exists $(\lambda_1^\gamma, \ldots, \lambda_m^\gamma)\in\mathbb{Q}^m\setminus\{(0, 0, \ldots, 0)\}$,  such that:
    \[\lambda_1^\gamma \mathbf{a}(\gamma x_1) + \ldots + \lambda_m^\gamma \mathbf{a}(\gamma x_m) = 0.\]
    This defines a mapping from $\bad$ to a countable set $\mathbb{Q}^m\setminus\{(0, 0, \ldots, 0)\}$:
    \[\Psi\colon \gamma \mapsto (\lambda_1^\gamma, \ldots, \lambda_m^\gamma).\]
    To establish that the set $\bad$ is countable, it is enough to show that for any $(\lambda_1, \ldots, \lambda_m)\in \mathbb{Q}^m\setminus\{(0, 0, \ldots, 0)\}$ there exists just finitely many $\gamma\in[0, 1]$ such that
\[ \lambda_1 \ba(\gamma x_1) + \ldots + \lambda_m\ba(\gamma x_m) = 0.\]
Indeed, this would imply that $\Phi^{-1}((\lambda_1, \ldots, \lambda_m))$ is finite for every $(\lambda_1, \ldots, \lambda_m)\in \mathbb{Q}^m\setminus\{(0, 0, \ldots, 0)\}$, meaning that we  can write for the the set $\bad$:
\[\bad = \bigcup\limits_{\overline{\lambda}\in \mathbb{Q}^m\setminus\{(0, 0, \ldots, 0)\}}\Phi^{-1}(\overline{\lambda}),\]
which is a countable union of finite sets, which can only be countable.

To finish the proof, we take an arbitrary tuple of coefficients $(\lambda_1, \ldots, \lambda_m)\in \mathbb{Q}^m\setminus\{(0, 0, \ldots, 0)\}$ and show that the function:
\[g(\gamma) = \lambda_1 \mathbf{a}(\gamma x_1) + \ldots + \lambda_m \mathbf{a}(\gamma x_m)\]
has only finitely many roots in $[0, 1]$. Observe that the function $g$ is analytic. Assume for contradiction that it has infinitely many roots in $[0, 1]$. 
It is a well-known fact that an analytic function, having infinitely many roots in a bounded interval, must be the constant zero function. For the reader's convenience, we give a simple derivation of this fact from Proposition \ref{der}. Any bounded infinite set of real numbers has an accumulation point, that is, a point $\gamma_0$ such that every neighborhood of $\gamma_0$ has a point from the set other than $\gamma_0$. We apply this fact to the set of roots of $g$ in $[0, 1]$. We observe that all derivatives of $g$ at $\gamma_0$ must be equal to $0$, implying that $g$ is the constant zero function by Proposition \ref{der}. Indeed, otherwise we can write $g(\gamma) = \frac{g^{(k)}(\gamma_0)}{k!}(\gamma - \gamma_0)^k + o((\gamma - \gamma_0)^k)$ as $\gamma\to\gamma_0$ for the smallest $k$ with $g^{(k)}(\gamma_0)\neq 0$, which would imply that $g(\gamma)$ is different from $0$ for all $\gamma\neq \gamma_0$ sufficiently close to $\gamma_0$.

Since $g$ is the constant zero function, we have:
   \[g^{(k)}(0) = (\lambda_1 x_1^k + \ldots + \lambda_m x_m^k) \mathbf{a}^{(k)}(0)\]
   for every $k\ge 0$. In turn,  since $\mathbf{a}$ is a non-polynomial analytic function, we have $\mathbf{a}^{(k)}(0)\neq 0$ for infinitely many\footnote{Indeed, if all derivatives of $\mathbf{a}$ of sufficiently large order are equal to $0$ at $0$, by Proposition \ref{der} we have that some derivative of $\ba$ is the constant zero function, meaning that $\ba$ itself is a polynomial.} $k$, meaning that
 \[\lambda_1 x_1^k + \ldots + \lambda_m x_m^k = 0\] 
 for infinitely many $k$.

   Take the largest $i\in\{1, \ldots, m\}$ such that $\lambda_i \neq 0$ (it exists because not all $\lambda_i$ are equal to $0$).
  Then we have:
   \begin{align*}
&\lim\limits_{k\to\infty}\frac{\lambda_1 x_1^k + \ldots + \lambda_m x_m^k}{x_i^k}\\ &= \lim\limits_{k\to\infty} \lambda_1\left(\frac{x_1}{x_i}\right)^k + \ldots + \lambda_i\left(\frac{x_i}{x_i}\right)^k .
   \end{align*}
   Since $0 < |x_1| < |x_2| < \ldots < |x_m|$, this limit is equal to $\lambda_i \neq 0$. However, the expression under the limit
    is equal to $0$ for infinitely many $k$, a contradiction.
\end{proof}

\section{Experiments}

To precisely describe our experiments, let us start by establishing some terminology. We say that two labelings $l_1$ and $l_2$ of the nodes of a given graph $G$ are \emph{equivalent} if for any two nodes $u,v$ in $G$, it holds that $l_1(u)=l_1(v)\iff l_2(u)=l_2(v)$. For a given graph $G$, we say that the WL algorithm  \emph{converges} in $T$ steps if $T$ is the first time for which the labelings at $T$ and $T+1$ are equivalent. It is not hard to see that in this case the labeling of $G$ at iteration $T$ is equivalent to the labeling at $T+t$ for all $t>0$. Given a graph $G$ on which WL converges in $T$ iterations, we say that a given MPNN $\mathcal{M}$ \emph{perfectly simulates} WL on $G$ if the labeling assigned to $G$ by $\mathcal{M}$ after $T$ iterations is equivalent to the one assigned by WL. 

For simplicity, we performed the experimental analysis for non-labeled graphs only. In this case, the architecture of our construction from \Cref{sec:proof} can be simplified to the following specifications for a MPNN $\mathcal{M}_\gamma$:

\begin{itemize}
    \item features initialization: \[f^{(0)}(v)=1, \quad v\in V\]
    \item features update: \[f^{(t+1)}(v) = \mathbf{a}\left(\gamma f^{(t)}(v) + \gamma \sum\limits_{u\in N(v)}f^{(t)}(u)\right)\] where $\gamma \in (0,1)$ is the same for all the iterations.  
\end{itemize}

Our theoretical result for this case predicts that for all $\gamma \in (0,1)$, except possibly for a countable number, $\mathcal{M}_\gamma$ perfectly simulates WL on every graph $G$. We notice that the validity of this theoretical result, however, depends on the assumption that $\mathcal{M}_\gamma$ is \emph{continuous}, in the sense that operations on real numbers are performed with arbitrarily high precision.

\paragraph{Uniform perfect simulation} To demonstrate the empirical validity of our theoretical statement, we generated 50 values of $\gamma$ chosen uniformly at random, and ran $\mathcal{M}_\gamma$ with sigmoid activation function on two different collections of random graphs: i) a collection of 300 sparse Erdos-Renyi random graphs, and a collection of 300 sparse scale-free random graphs (generated with the Barab\'asi-Albert algorithm). In both cases, the graph sizes range from 50 to 250 nodes. As illustrated in Figure \ref{fig:sacos}, we obtained that for nearly all values of $\gamma$ (45 in the case of Barabasi, and 47 for Erdos-Rendyi), $\mathcal{M}_\gamma$ perfectly simulated WL on all graphs. 

\begin{figure}[!h]
    \centering
    \includegraphics[width=7cm]{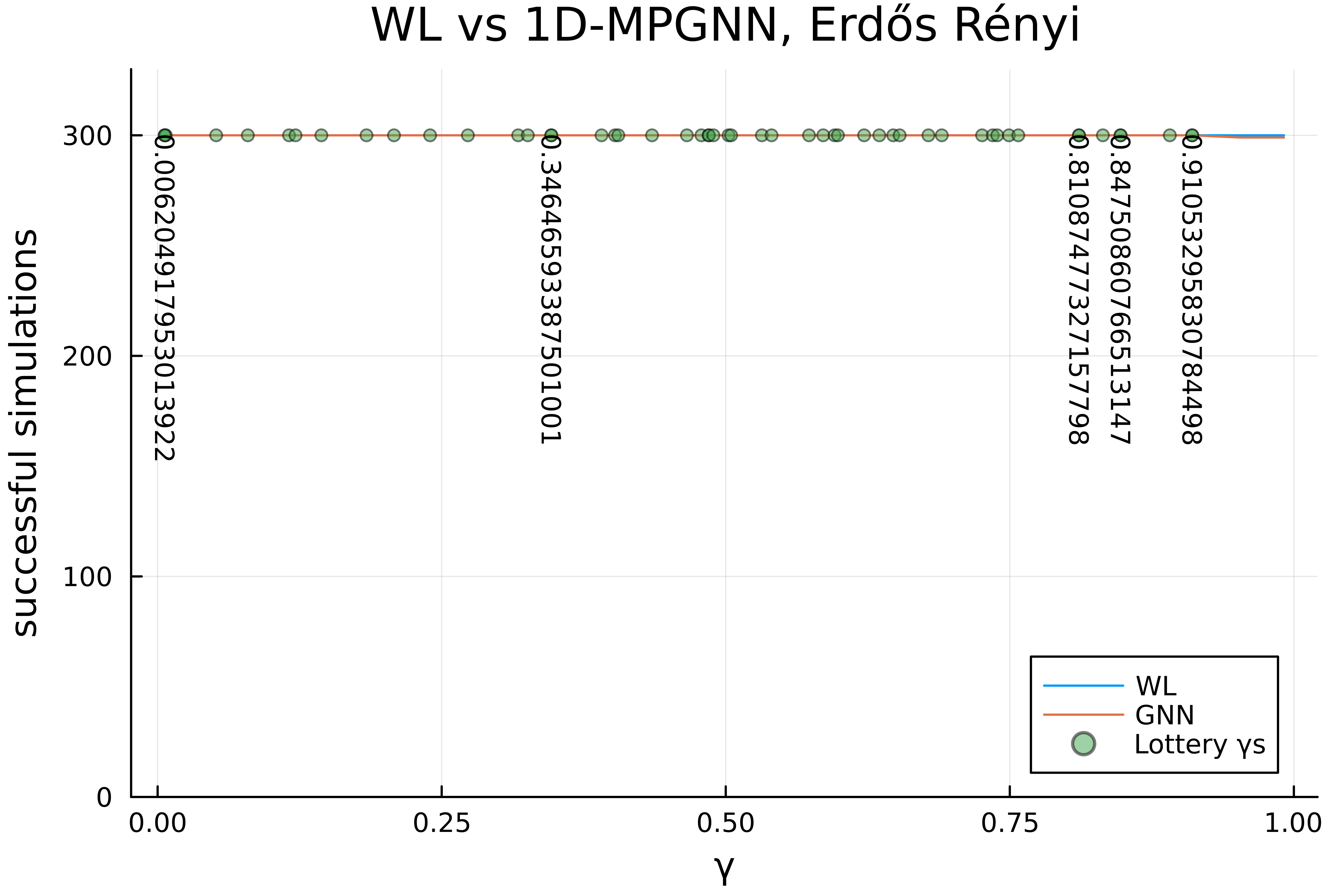}
    \includegraphics[width=7cm]{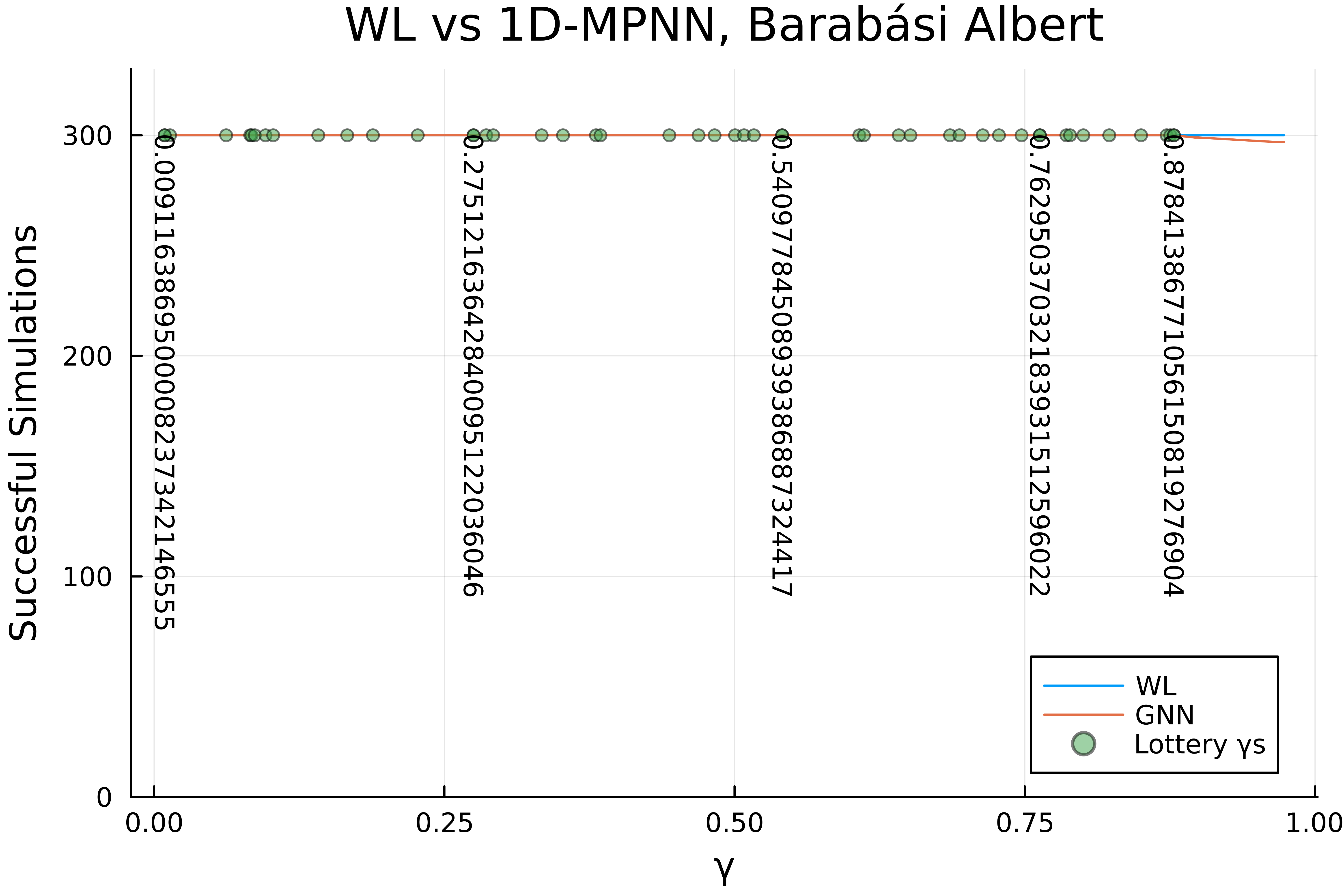}
    \caption{Number of perfect simulations of $\mathcal{M}_\gamma$ among the 300 random graphs, for 50 different values of $\gamma$ chosen uniformly at random. Lottery $\gamma$s are those that achieved perfect simulation on all the 300 graphs -- 45 in the case of Erdos-Renyi graphs, and 44 for scale-free graphs. For each case, the value of a few $\gamma$s is also shown.}
    \label{fig:sacos}
\end{figure}

\paragraph{Dependence on precision bits} To study how the number of precision bits required to achieve perfect simulation is affected by the size of the input graphs, we performed the following experiment: we generated random Erdos-Renyi graphs of increasing sizes, from $n=50$ to $n=3000$, together with 20 values for $\gamma$ chosen uniformly at random between $0$ and $0.5$\footnote{We experimentally observed that smaller values for $\gamma$ tend to work better, which explains the chosen range for this experiment.}. For each graph and each value of $\gamma$, we computed the minimum number of bits required for perfect simulation. Figure \ref{fig:bits} shows, for each $n$, the average over $\gamma$ of this minimum precision, together with the standard deviations. The obtained curve is approximately $O(\log(n))$.

\begin{figure}[!h]
    \centering
    \includegraphics[width=7cm]{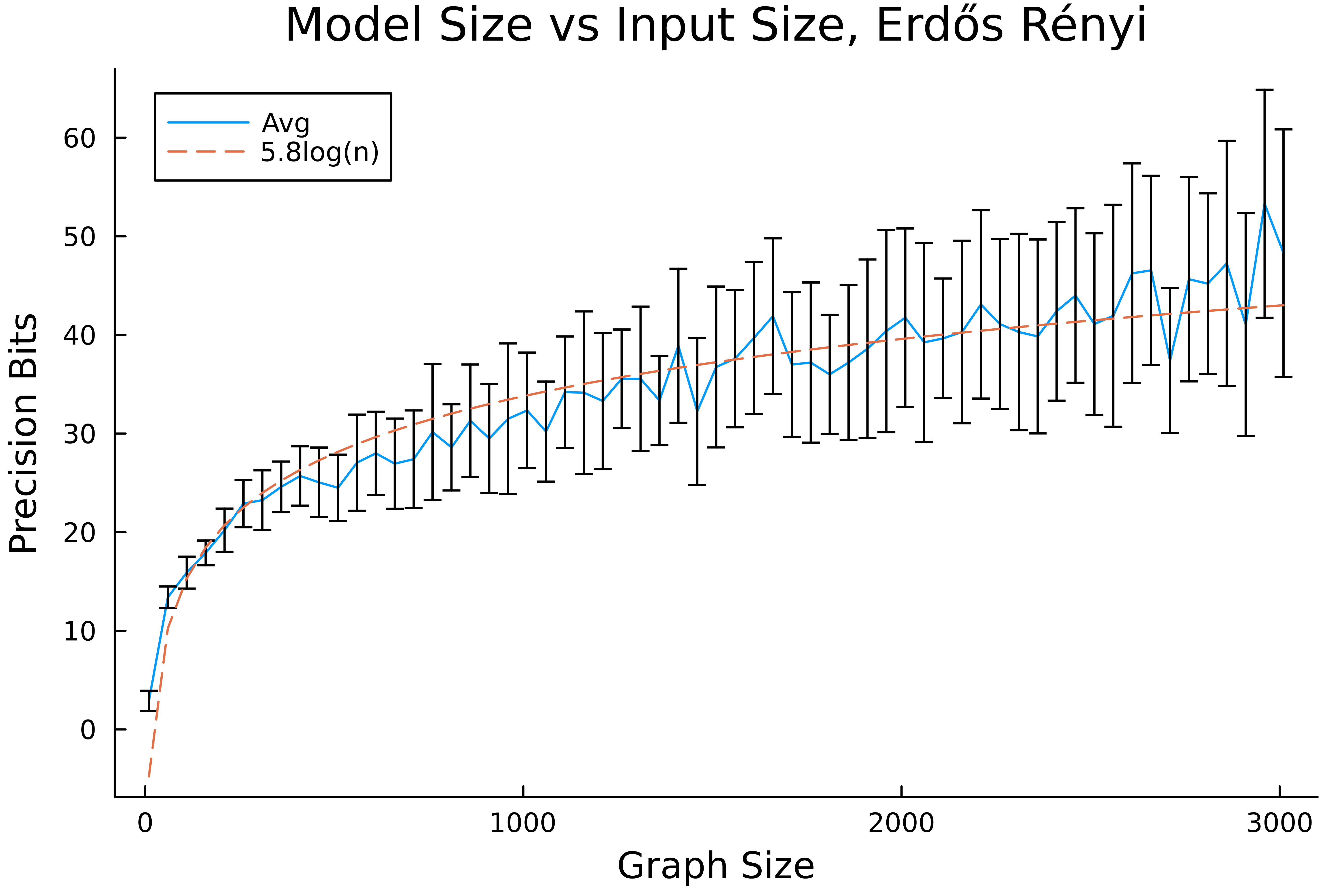} 
    
    \includegraphics[width=7cm]{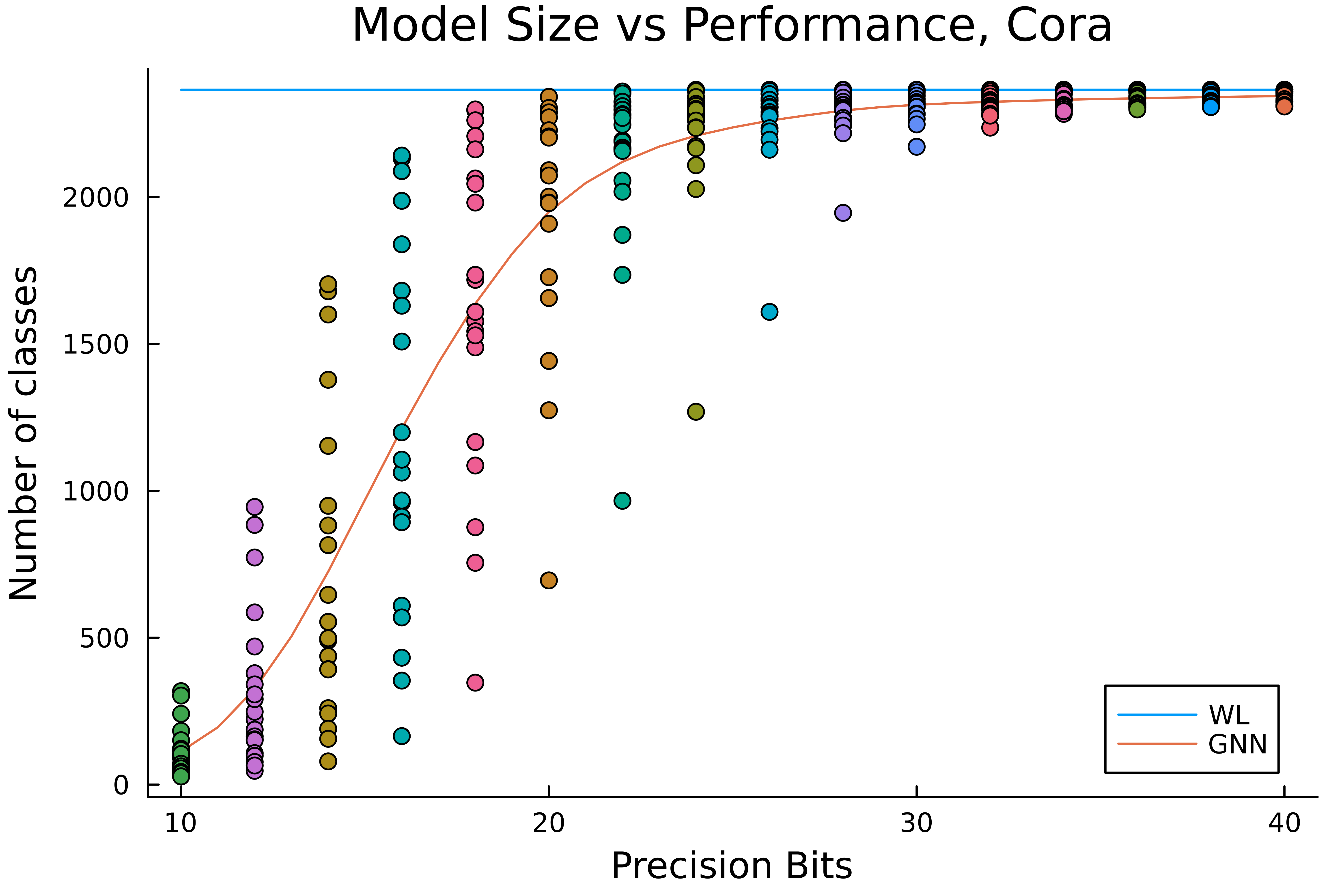}
    
    \includegraphics[width=7cm]{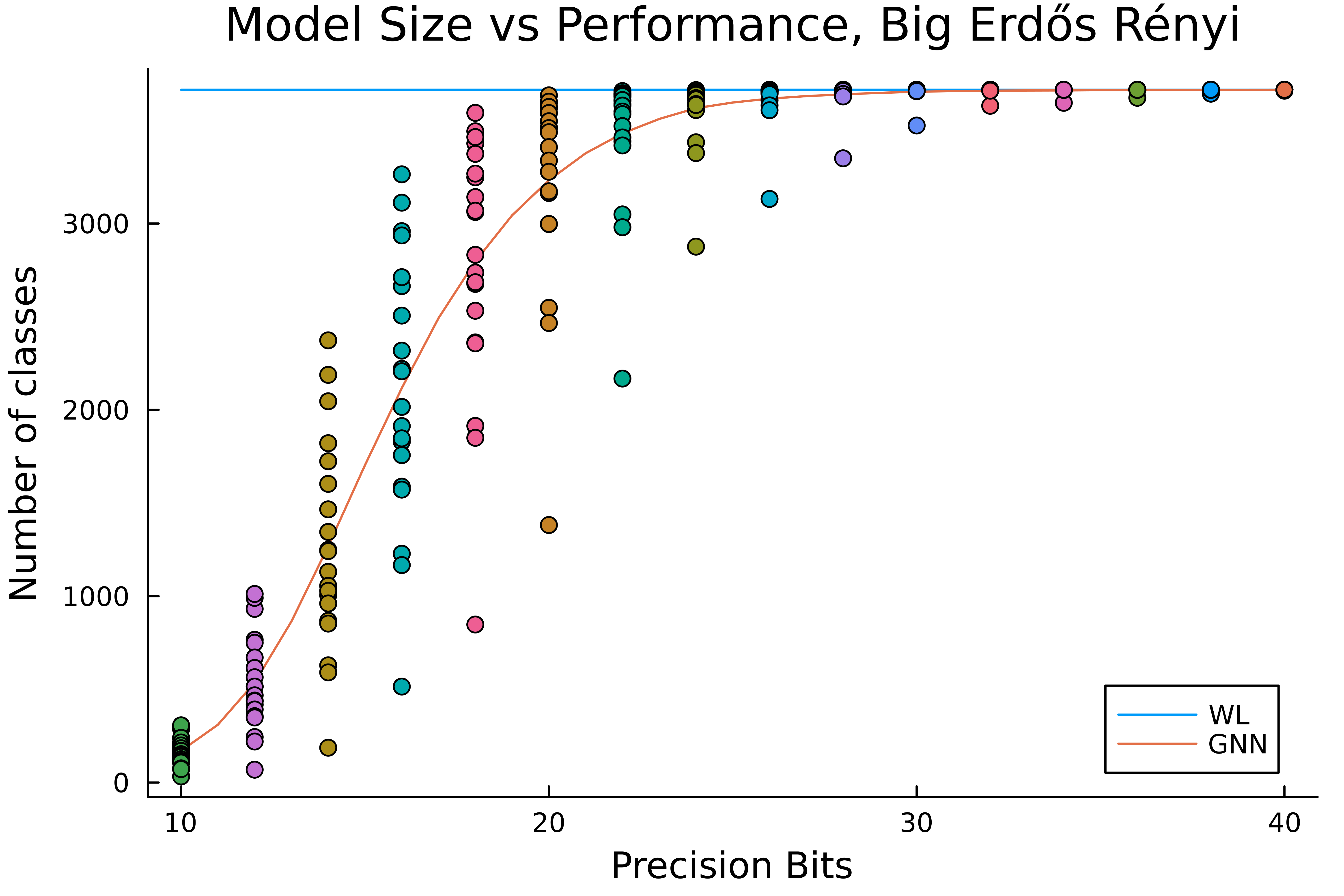}
    \caption{The figure on top shows the number of precision bits required for perfect simulation as a function of the size of the graph. The other figures display the quality of the simulation as a function of the number of precision bits. For each precision and each value of $\gamma$, the number of classes in the final labelling output by $\mathcal{M}_
    \gamma$ are potted. The true number of classes in the case of CORA is 2365, whereas for the big (5000 nodes) Erdos-Renyi graph is 3729.}
\label{fig:bits}
\end{figure}

Finally, we analyzed the real-world graph from the database CORA\footnote{\url{https://graphsandnetworks.com/the-cora-dataset/}} as well as a big Erdos-Renyi random graph with $5000$ nodes and measured how the performance of the one-dimensional MPNNs improves as a function of the number of precision bits. Figure \ref{fig:bits} shows,  for 20 randomly chosen values of $\gamma$, the number of classes obtained by $\mathcal{M}_\gamma$ as a function of the number of precision bits. We have made available the code of all our experiments\footnote{\url{https://anonymous.4open.science/r/Single-channel-GNN-B05F/}}.

%

\section{Higher-order MPNN and WL tests}
We briefly discuss how to extend our results to the higher-order version of the WL test and corresponding higher-order MPNN. We are using a so-called \emph{non-folklore}  version of the WL test (NWL), not to be confused with the folklore one, see~\cite{huang2021short}.

Fix $k\ge 2$. We will work over graphs with labelings of their $k$-tuples of nodes, formally, with triples of the form $G = \langle V, E, \well\rangle$, where $\well\colon V^k \to L$. If we are given just an unlabeled graph $G = \langle V, E \rangle$, a standard way to define $\well$ is via \emph{isomorphism-type matrices}:
\[\well(\mathbf{v}) = A_\mathbf{v}, \qquad \mathbf{v} = (v_1, \ldots, v_k)\in V^k,\]
where $A_\mathbf{v}$ is a $k\times k$ matrix, defined by:
\[(A_\mathbf{v})_{ij} = \begin{cases}
    \mathrm{EDGE} & \{v_i, v_j\} \in E, \\
    \mathrm{NON-EDGE} & \{v_i, v_j\} \notin E \text{ and } v_i\neq v_j, \\
    \mathrm{EQUAL} &  v_i = v_j.
\end{cases}\]

The $k$-order NWL test, given a graph $G = \langle V, E, \well\rangle$, assigns it an infinite sequence of labelings $\{\phi^{(t)}_G\}_{t = 0}^\infty$ of $k$-tuples of nodes of $G$ (formally, these are functions whose domain is $V^k$). The initial labeling in this sequence coincides with $\well$:
\[\phi^{(0)}_G(\mathbf{v}) = \well(\mathbf{v}), \qquad \mathbf{v}\in V^k.\]
To define the update rule, we have to introduce some ``neighbor'' notation for $k$-tuples. For a tuple $\mathbf{v} = (v_1, \ldots, v_k)\in V^k$ and for $i\in\{1, 2,\ldots, k\}$, we define the set:
\[N_i(\mathbf{v}) = \left\{(v_1, \ldots, v_{i - 1}, u, v_{i + 1}, \ldots, v_k) \mid u\in V\right\}\]
of tuples that coincide with $\mathbf{v}$ in all coordinates except, possibly, $i$ (if one represents $V^k$ as a hypercube, the set $N_i(\mathbf{v})$ will consist of $\mathbf{v}$ and its neighbors by edges, parallel to the $i$th dimension). The update is defined as follows:
\[\phi^{(t+1)}_G(\mathbf{v}) = \left(\phi^{(t)}_G(\mathbf{v}), \bleftm \phi^{(t)}_G(\mathbf{u}) \mid \mathbf{u}\in N_1(\mathbf{v})\brightm, \ldots, \bleftm \phi^{(t)}_G(\mathbf{u}) \mid \mathbf{u}\in N_k(\mathbf{v})\brightm \right)\]
Given two graphs $G_1$ and $G_2$ with labeled $k$-tuples, the $k$-order NWL test distinguishes them if and only if there exists $t\ge 0$ such that
\[\bleftm \phi_{G_1}^{(t)}(\mathbf{v})\mid \mathbf{v}\in V^k\brightm \neq \bleftm \phi_{G_2}^{(t)}(\mathbf{v})\mid \mathbf{v}\in V^k\brightm\]

It is well-known that the 2-order NWL test is equivalent to the standard WL test (defined in the main body of the paper), and in general, $k$-order NWL is equivalent to the $(k - 1)$-order folklore WL test~\cite{huang2021short}.

\medskip

Next, we introduce $k$-order non-folklore MPNNs. 
A  $T$-layer $k$-order non-folklore MPNN $\mathcal{N}$ with $d$-dimensional feature vectors and activation function $a\colon \mathbb{R}\to\mathbb{R}$ is specified by:
\begin{itemize}
    \item a label-encoding function $\mathbf{e}\colon L\to\mathbb{R}^d$;
    \item a $T$-length sequence of $(k+1)$-tuples of $d\times d$ real matrices:
\[\left\{(W_0^{(t)}, W_1^{(t)}, \ldots, W_k^{(t)})\right\}_{t = 1}^T.\]    
\end{itemize}

Given a  graph $G = \langle V, E, \well\rangle$ with the labeling of $k$-tuples of nodes, the $k$-order non-folklore MPNN $\mathcal{N}$ functions on it as follows. The initial feature vectors are computed as the composition of the label encoding and initial labeling:
\[f^{(0)}(\mathbf{v}) = \mathbf{e}(\well(\mathbf{v}))\in\mathbb{R}^d, \qquad \mathbf{v}\in V^k. \]
Then for $t = 0, \ldots, T-1$, the updates  of the feature vectors are performed as follows:
\begin{align}
\label{k_gnn_update}
    f^{(t+1)}(\mathbf{v}) &= \mathbf{a}\left(W_0^{(t+1)} f^{(t)}(\mathbf{v}) + W_1^{(t+1)}\sum\limits_{\mathbf{u}\in N_1(\mathbf{v})} f^{(t)}(\mathbf{u})+\ldots+ W_k^{(t+1)}\sum\limits_{\mathbf{u}\in N_k(\mathbf{v})} f^{(t)}(\mathbf{u})\right),\\
    f^{(t+1)}(\mathbf{v})&\in\mathbb{R}^d, \mathbf{v}\in V^k.
\end{align}
The function $\mathbf{a}$ is applied component-wise.

The output of $\mathcal{N}$ on the graph $G$ is defined as the sum of feature vectors of all the $k$-tuples after $T$ iterations:
\[\mathcal{N}(G) = \sum\limits_{\mathbf{v}\in V^k} f^{(T)}(v).\]
As before,  two graphs $G_1, G_2$ are distinguished by $\mathcal{N}$ if $\mathcal{N}(G_1) \neq \mathcal{N}(G_2)$.

\begin{theorem}
\label{thm_higher}
For any non-polynomial analytic function $\mathbf{a}\colon\mathbb{R}\to\mathbb{R}$ the following holds.
Two graphs can be distinguished by the $k$-order NWL test if and only if they can be distinguished by some $k$-order non-folklore MPNN with 1-dimensional feature vectors and activation function $\mathbf{a}$.
\end{theorem}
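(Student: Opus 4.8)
The plan is to follow the pattern of the proof of Theorem~\ref{thm_main2}, adapted to $k$-tuples. The ``only if'' direction would go through as in the order-$1$ case: by induction on $t$ one checks that, for a fixed $k$-order non-folklore MPNN, the feature value $f^{(t)}(\mathbf{v})$ is a function $h_t$ of $\phi^{(t)}_G(\mathbf{v})$ with $h_t$ independent of the graph $G$ (the step uses that $\phi^{(t+1)}_G(\mathbf{v})$ records $\phi^{(t)}_G(\mathbf{v})$ and the multisets $\bleftm \phi^{(t)}_G(\mathbf{u}) \mid \mathbf{u}\in N_i(\mathbf{v})\brightm$, which determine $f^{(t)}(\mathbf{v})$ and the sums $\sum_{\mathbf{u}\in N_i(\mathbf{v})} f^{(t)}(\mathbf{u})$). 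Hence $\mathcal{N}(G) = \sum_\lambda m_\lambda(G)\,h_T(\lambda)$, where $m_\lambda(G)$ is the multiplicity of the label $\lambda$ among the $T$-th NWL labels of $G$, so graphs not distinguished by NWL receive the same output. The bulk of the work is the converse.

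The crucial difference from the order-$1$ argument is that the disjoint-union reduction is unavailable: the sets $N_i(\mathbf{v})$ are combinatorial rather than edge-based, so inside $G_1\sqcup G_2$ the set $N_i$ of a tuple living in $G_1$ also contains tuples with a coordinate in $G_2$, and NWL on $G_1\sqcup G_2$ does not restrict to NWL on $G_1$ and on $G_2$. I would therefore prove a two-graph version of Lemma~\ref{invariant}: given $G_1 = \langle V_1, E_1, \well_1\rangle$ and $G_2 = \langle V_2, E_2, \well_2\rangle$, put $n = \max(|V_1|,|V_2|)$; then for every $T\ge 0$ there is a $k$-order non-folklore MPNN with $1$-dimensional feature vectors and activation $\ba$, all of whose parameters depend only on $n$, $k$ and a single real $\gamma\in(0,1)$, such that, running it on $G_1$ and on $G_2$, for every $0\le t\le T$: (i) two tuples (of $G_1$ or $G_2$) get the same feature value iff they get the same NWL label (computed in their own graph); and (ii) the set $F_t = \{f^{(t)}(\mathbf{v}) : \mathbf{v}\in V_1^k\}\cup\{f^{(t)}(\mathbf{v}) : \mathbf{v}\in V_2^k\}$ of all feature values produced on the two graphs is linearly independent over $\mathbb{Q}$. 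Since all parameters except $\gamma$ depend only on $n,k$, the \emph{same} MPNN is run on both graphs, which is what makes the feature/label correspondence in (i) genuinely shared between them — a point that is automatic via the disjoint union in the order-$1$ proof but must be arranged by hand here.

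This lemma would be proved by induction on $t$. For $t = 0$, the label-encoding sends each label occurring in $G_1$ or $G_2$ to the square root of its own prime, giving (i) by injectivity and (ii) since square roots of distinct primes are $\mathbb{Q}$-independent. For the step, append the layer
\[f^{(t+1)}(\mathbf{v}) = \ba\!\left(\gamma\Bigl[(n+1)^k f^{(t)}(\mathbf{v}) + \sum_{i=1}^k (n+1)^{i-1}\!\!\sum_{\mathbf{u}\in N_i(\mathbf{v})}\!\! f^{(t)}(\mathbf{u})\Bigr]\right),\]
i.e. $W_0^{(t+1)} = \gamma (n+1)^k$ and $W_i^{(t+1)} = \gamma (n+1)^{i-1}$ for $1\le i\le k$; write $x_\mathbf{v}$ for the bracketed pre-activation. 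Expanding $x_\mathbf{v}$ over $F_t$, the coefficient of a value $y$ is $(n+1)^k[y = f^{(t)}(\mathbf{v})] + \sum_{i=1}^k (n+1)^{i-1} m_i(y)$ with $m_i(y) = |\{\mathbf{u}\in N_i(\mathbf{v}) : f^{(t)}(\mathbf{u}) = y\}| \le |N_i(\mathbf{v})| \le n < n+1$. So $\sum_i (n+1)^{i-1} m_i(y)$ is the base-$(n+1)$ expansion of $(m_1(y),\dots,m_k(y))$ and lies in $\{0,\dots,(n+1)^k - 1\}$; hence, since (ii) at step $t$ lets us read off all these coefficients from $x_\mathbf{v}$, the unique $y$ with coefficient $\ge (n+1)^k$ is $f^{(t)}(\mathbf{v})$, and subtracting $(n+1)^k$ from that coefficient recovers every vector $(m_1(y),\dots,m_k(y))$, hence the $k$ multisets $\bleftm f^{(t)}(\mathbf{u}) \mid \mathbf{u}\in N_i(\mathbf{v})\brightm$; by (i) at step $t$ these together with $f^{(t)}(\mathbf{v})$ determine $\phi^{(t)}(\mathbf{v})$ and the multisets $\bleftm \phi^{(t)}(\mathbf{u})\mid \mathbf{u}\in N_i(\mathbf{v})\brightm$, i.e. $\phi^{(t+1)}(\mathbf{v})$, and conversely $\phi^{(t+1)}(\mathbf{v})$ determines $x_\mathbf{v}$. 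Thus the numbers $x_\mathbf{v}$ already satisfy (i). Each $x_\mathbf{v}$ is a combination of elements of $F_t$ with non-negative integer coefficients, not all zero, so by (ii) the distinct values among the $x_\mathbf{v}$ (over both graphs) are nonzero and pairwise of distinct absolute value; applying Lemma~\ref{ind_lemma} to this finite list gives, for all but countably many $\gamma$, that the numbers $\ba(\gamma x_\mathbf{v}) = f^{(t+1)}(\mathbf{v})$ are $\mathbb{Q}$-independent, which is (ii) at $t+1$ and, being in particular distinct, preserves (i). Finally pick one $\gamma\in(0,1)$ outside the countable union, over the $T$ layers, of the exceptional sets.

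To conclude: if the $k$-order NWL test distinguishes $G_1$ and $G_2$, it does so after some $T$ rounds, so the multisets of their $T$-th NWL labels differ; applying the lemma with this $T$, condition (i) yields a bijection $\Theta$ from the NWL labels occurring in $G_1$ or $G_2$ onto $F_T$ through which $\mathbf{v}\mapsto f^{(T)}(\mathbf{v})$ factors, so
\[\mathcal{N}(G_1) - \mathcal{N}(G_2) = \sum_{\lambda}\bigl(m_\lambda(G_1) - m_\lambda(G_2)\bigr)\Theta(\lambda)\]
is a nontrivial integer combination of elements of the $\mathbb{Q}$-independent set $F_T$, hence nonzero. I expect the main obstacle to be exactly the loss of the disjoint-union trick: it forces the simultaneous two-graph formulation and the bookkeeping that the feature/label correspondence is shared (which is why every parameter other than $\gamma$ is made to depend only on $n$ and $k$), and, secondarily, one must store a tuple's own label together with its $k$ neighbour-multisets injectively in a single real number, which the positional (base-$(n+1)$) choice of the matrix entries achieves.
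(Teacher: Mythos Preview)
Your proof is correct and follows the same skeleton as the paper's: induction on $t$ maintaining $\mathbb{Q}$-independence of the feature set, a base-$(n+1)$ positional encoding of the own value together with the $k$ neighbour multisets into a single pre-activation $x_\mathbf{v}$, and an appeal to Lemma~\ref{ind_lemma} to restore $\mathbb{Q}$-independence after applying $\ba$. The paper uses the weights $W_0=\gamma$, $W_i=\gamma(n+1)^i$ (own term in the units digit) whereas you put the own term at the top via $W_0=\gamma(n+1)^k$, $W_i=\gamma(n+1)^{i-1}$; both decodings are equally valid.

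The one substantive point on which you are more careful than the paper is the reduction from two graphs to one. The paper simply writes ``the proof of Theorem~\ref{thm_main} works almost verbatim'' and re-proves the single-graph invariant, implicitly relying on the disjoint-union trick from Section~\ref{sec:proof}. Your observation that this trick fails for $k$-NWL (because $N_i(\mathbf{v})$ in $G_1\sqcup G_2$ picks up tuples with a coordinate in the other component) is correct, and your remedy --- proving a two-graph version of Lemma~\ref{invariant} with all non-$\gamma$ parameters depending only on $n=\max(|V_1|,|V_2|)$ and $k$, so that the \emph{same} MPNN runs on both graphs and the label/feature bijection is shared --- is the right fix. This is not a different proof strategy so much as a necessary repair of a step the paper glosses over; everything else matches.
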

\begin{proof}
The proof of Theorem \ref{thm_main} works almost verbatim.  As before, everything reduces to a construction, for a given graph $G = \langle V, E, \well\rangle$, and for any number of iterations $T$, of a $k$-order MPNN $\mathcal{N}$ such that, for any $t = 0, 1, \ldots, T$, we first have one-to-one correspondence between feature vectors and WL labels:
\begin{equation}
\label{equiv_lab}
f^{(t)}(\mathbf{v}) = f^{(t)}(\mathbf{u}) \iff \phi^{(t)}(\mathbf{v}) = \phi^{(t)}(\mathbf{u}), \qquad \mathbf{v}, \mathbf{u}\in V^k, 
\end{equation}
and second, the set of feature ``vectors'' (scalars, in our case) $\{f^{(t)}(\mathbf{v})\mid \mathbf{v}\in V^k\}$ is linearly independent over $\mathbb{Q}$.

The proof is, as before, by induction on $t$, and to go from $t$ to $t +1$, we construct a layer of the form:
\begin{align*}
f^{(t+1)}(\mathbf{v}) &= \mathbf{a}\left(\gamma \cdot x_\mathbf{v}\right), \\
x_\mathbf{v} &=  f^{(t)}(\mathbf{v}) + (n+1)\sum\limits_{\mathbf{u}\in N_1(\mathbf{v})} f^{(t)}(\mathbf{u})+\ldots+ (n+1)^k\sum\limits_{\mathbf{u}\in N_k(\mathbf{v})} f^{(t)}(\mathbf{u})
\end{align*}
where $n$ is the number of nodes of $G$ and $\gamma\in [0, 1]$ is a parameter to be chosen later (this is a special case of \eqref{k_gnn_update}). We need to observe that $x_\mathbf{v}$ uniquely determines $\phi^{(t+1)}(\mathbf{v})$ for $\mathbf{v}\in V^k$. Indeed,  to determine
\[\phi^{(t+1)}(\mathbf{v}) = \left(\phi^{(t)}(\mathbf{v}), \bleftm \phi^{(t)}(\mathbf{u}) \mid \mathbf{u}\in N_1(\mathbf{v})\brightm, \ldots, \bleftm \phi^{(t)}(\mathbf{u}) \mid \mathbf{u}\in N_k(\mathbf{v})\brightm \right),\]
it is enough, by \eqref{equiv_lab}, to determine:
\[\left(f^{(t)}(\mathbf{v}), \bleftm f^{(t)}(\mathbf{u}) \mid \mathbf{u}\in N_1(\mathbf{v})\brightm, \ldots, \bleftm f^{(t)}(\mathbf{u}) \mid \mathbf{u}\in N_k(\mathbf{v})\brightm \right).\]
For this, in turn, it is enough to determine, for every $f\in\{f^{(t)}(\mathbf{v})\mid \mathbf{v}\in V^k\}$, whether it is equal to $f^{(t)}(v)$, and how many times it appears in each of the sums:
\[\sum\limits_{\mathbf{u}\in N_1(\mathbf{v})} f^{(t)}, \ldots, \sum\limits_{\mathbf{u}\in N_k(\mathbf{v})} f^{(t)}.\]
Since the set $\{f^{(t)}(\mathbf{v})\mid \mathbf{v}\in V^k\}$ is linearly independent over $\mathbb{Q}$, knowing $x_\mathbf{v}$, one can determine the coefficient before $f$ in the sum, defining $x_\mathbf{v}$, and this coefficient will be of the form $c_f = a_0 + (n + 1) a_1 + \ldots + (n+1)^k a_k$, where $a_0 = \mathbb{I}\{f = f^{(t)}(v)\}$, and $a_i$ is the number of occurrences of $f$ in the sum $\sum_{\mathbf{u}\in N_i(\mathbf{v})} f^{(t)}$, for $i = 1, \ldots, k$. Each of the sum consists of $n$ terms, meaning that $0\le a_0, a_1,\ldots,a_k\le n$, and this implies that $a_0, \ldots, a_k$ are the digits of $c_f$ in the $(n+1)$-base expansion.

Thus, the numbers $x_{\mathbf{v}}, \mathbf{v}\in V^k$ are in the one-to-one correspondence with the WL labels after $t+1$ iterations. It remains to make them linearly independent over $\mathbb{Q}$ again, by applying the transformation:
\[x_{\mathbf{v}} \mapsto \mathbf{a}(\gamma \cdot x_{\mathbf{v}})\]
for a suitable choice of $\gamma$, which again exists by the \Cref{ind_lemma}. No two numbers $x_{\mathbf{v}}, x_{\mathbf{u}}$ can sum up to $0$ as it would give a non-trivial linear combinations with rational coefficients of feature vectors after $t$ iterations. Thus, the numbers  $x_{\mathbf{v}}, \mathbf{v}\in V^k$ satisfy the conditions of Lemma \ref{ind_lemma}.
\end{proof}

\section{Conclusion and future work}

In this paper, we have demonstrated that, as far as expressive power is concerned, the dimensionality of feature vectors is not a restricting factor in the design of MPNNs. Besides providing some theoretical justification of the behavior already observed in practice for these architectures, there are some consequences related to generalization that may be relevant for practical considerations. As shown in \cite{morris2023wl}, low-dimensional architectures exhibit significantly better generalization performance, as compared to higher-dimensional ones. According to the experiments they report, the difference between training and testing accuracies was systematically below 5\% across all data sets for $d=4$, whereas for $d=1024$ this difference can become more than 45\% for some data sets. The accuracy itself for low-dimensional architectures, however, was observed to vary significantly along different data sets, ranging from 30\% of accuracy for some data sets to more than 90\% for others. 

What is the reason for this discrepancy in accuracy across different data sets?

A natural hypothesis is that such low-dimensional architectures have limited expressive power, and thus there exist data sets that are simply impossible to fit for them. Our results, however, suggest that this may not necessarily be the case. The explanation, therefore, could rather be related to the training process itself (relative to the data), than to the intrinsic limitations of the architecture. How does this trade-off work, and how to find the optimal dimensionality for different practical applications, are interesting topics for future research.

\end{document}